\renewcommand{\paragraph}[1]{\noindent\textbf{#1}}
\theoremstyle{plain}
\newtheorem{theorem}{Theorem}[section]
\theoremstyle{definition}
\newtheorem{definition}[theorem]{Definition}
\theoremstyle{remark}
\renewcommand{\paragraph}[1]{\noindent\textbf{#1}}
\newcommand{\stkout}[1]{\ifmmode\text{\sout{\ensuremath{#1}}}\else\sout{#1}\fi}
\def\eqref#1{equation~\ref{#1}}
\def\1{\bm{1}}
\providecommand{\1}{\mathbf{1}}
\providecommand{\dd}{\mathbf{d}}
\providecommand{\uu}{\mathbf{u}}
\providecommand{\ww}{\mathbf{w}}
\providecommand{\xx}{\mathbf{x}}
\providecommand{\yy}{\mathbf{y}}
\providecommand{\zz}{\mathbf{z}}
\providecommand{\cE}{\mathcal{E}}
\providecommand{\cG}{\mathcal{G}}
\providecommand{\cL}{\mathcal{L}}
\providecommand{\cS}{\mathcal{S}}
\providecommand{\cV}{\mathcal{V}}
\providecommand{\cZ}{\mathcal{Z}}
\providecommand{\R}{\mathbb{R}} 
\providecommand{\mA}{\mathbf{A}}
\providecommand{\mH}{\mathbf{H}}
\newcommand{\norm}[1]{\left\lVert#1\right\rVert}
\providecommand{\obse}{x}
\providecommand{\disc}{d}
\providecommand{\late}{z}
\providecommand{\obs}{\xx}
\providecommand{\dis}{\dd}
\providecommand{\sdis}{\tilde{\dis}}
\providecommand{\lat}{\zz}
\providecommand{\Zz}{\cZ}
\providecommand{\Ds}{\Omega_{\mathrm{supp}}}
\providecommand{\Dc}{\Omega_{\mathrm{CP}}}
\providecommand{\Dcomp}{\Omega_{\mathrm{comp}}}
\providecommand{\parents}[1]{\mathrm{Pa}(#1)}
\providecommand{\supp}[1]{\mathrm{supp}(#1)}
\newcommand{\ours}{\textbf{HierDiff}\xspace}
\definecolor{cvprblue}{rgb}{0.21,0.49,0.74}
\title{Learning by Analogy: A Causal Framework for Composition Generalization}
\author{
\textbf{Lingjing Kong}$^{1}$, \textbf{Shaoan Xie}$^{1}$, \textbf{Yang Jiao}$^{2}$, \textbf{Yetian Chen}$^{2}$, \textbf{Yanhui Guo}$^{2}$, \\ 
\textbf{Simone Shao}$^{2}$, \textbf{Yan Gao}$^{2}$, \textbf{Guangyi Chen}$^{1,3}$, \textbf{Kun Zhang}$^{1,3}$ \\[2mm]
$^{1}$Carnegie Mellon University \quad
$^{2}$Amazon \quad
$^{3}$Mohamed bin Zayed University of Artificial Intelligence
}
\begin{document}
\maketitle

\begin{abstract}
    Compositional generalization -- the ability to understand and generate novel combinations of learned concepts -- enables models to extend their capabilities beyond limited experiences.
    While effective, the data structures and principles that enable this crucial capability remain poorly understood.
    We propose that compositional generalization fundamentally requires decomposing high-level concepts into basic, low-level concepts that can be recombined across similar contexts, similar to how humans draw \textbf{analogies} between concepts. For example, someone who has never seen a peacock eating rice can envision this scene by relating it to their previous observations of a chicken eating rice.
    In this work, we formalize these intuitive processes using principles of \textbf{causal modularity} and \textbf{minimal changes}. 
    We introduce a hierarchical data-generating process that naturally encodes different levels of concepts and their interaction mechanisms.
    Theoretically, we demonstrate that this approach enables compositional generalization supporting complex relations between composed concepts, advancing beyond prior work that assumes simpler interactions like additive effects.
    Critically, we also prove that this latent hierarchical structure is \textbf{provably recoverable} (identifiable) from observable data like text-image pairs, a necessary step for learning such a generative process. To validate our theory, we apply insights from our theoretical framework and achieve significant improvements on benchmark datasets.
\end{abstract}

\section{Introduction}
Compositional generalization is a hallmark of human intelligence, enabling us to navigate a vast array of novel situations despite limited direct experience.
Humans often achieve this by \textbf{drawing analogies}~\citep{gentner1983structure,holyoak1989analogical}.
Even without having seen a peacock eating rice, one can relate this scene to observations of a chicken eating rice by recognizing shared low-level features: both peacocks and chickens have beaks, wings, and other common attributes. This process effectively transforms interactions between high-level concepts (``peacock \& rice'') into more fundamental low-level concepts (``beak \& rice'' or ``peck'' as shown in Figure~\ref{fig:teaser}). Since these low-level, elementary interactions appear across many observed scenarios, they can be transferred to accurately visualize novel combinations of high-level concepts.
This process involves two key cognitive steps: 1) \emph{decomposing complex, high-level concepts into low-level, modular components}, and 2) \emph{recombining these low-level concepts to synthesize the novel scene}.
Clearly, this ability depends on favorable structures in the underlying data distribution.
In light of this, we aim to address the following fundamental question:

\begin{center}
\textit{What latent data structures entail this analogical, modular process to enable compositional generalization, and can we provably learn them from observational data?}
\end{center}
Answering this question is essential for deliberately incorporating this valuable capability into machine learning models, which typically perform poorly when confronted with data outside their training distribution \citep{koh2021wilds, recht2019imagenet, taori2020measuring}.
\begin{figure*}[t]
    \centering
    \begin{minipage}{0.6\linewidth}
        \centering
        \includegraphics[width=0.8\linewidth]{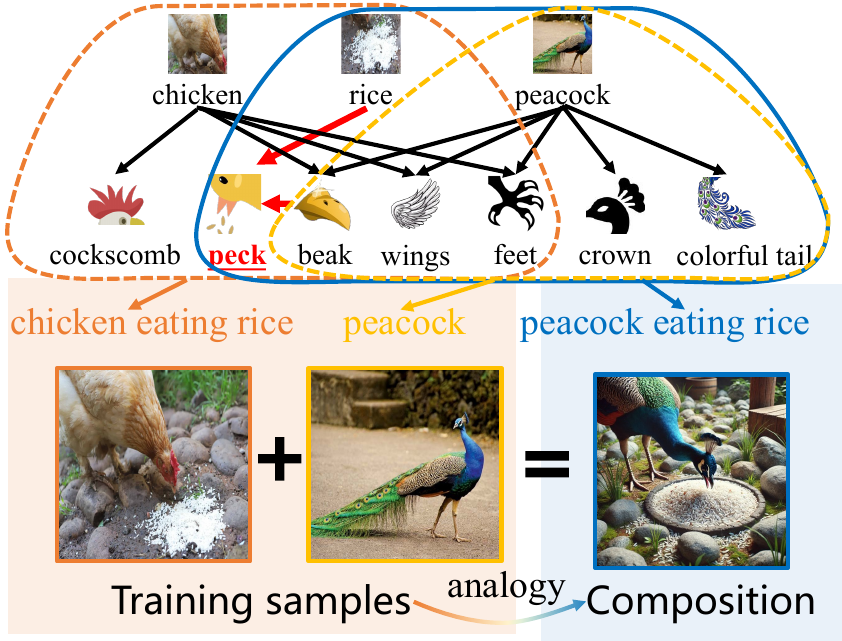}
        \caption{
            \textbf{Composition by analogy.}
            Dotted circles group concepts by their sources.
            The hierarchical process generalizes to an unseen concept \textcolor{blue}{``peacock eating rice''} by composing low-level modules \textcolor{yellow!36!orange}{``peacock''} and \textcolor{orange}{``chicken eating rice''}. The interaction ``beak \& rice'' (denoted as \textcolor{red}{``peck''}) transfers from \textcolor{orange}{``chicken eating rice''} to \textcolor{blue}{``peacock eating rice''}, akin to human making analogies.
            The resulting composition \textcolor{blue}{``peacock eating rice''} inherits modular components from all source concepts.
        }
        \label{fig:teaser}
    \end{minipage}
    \hfill
    \begin{minipage}{0.36\linewidth}
        \centering
        \includegraphics[width=1\linewidth]{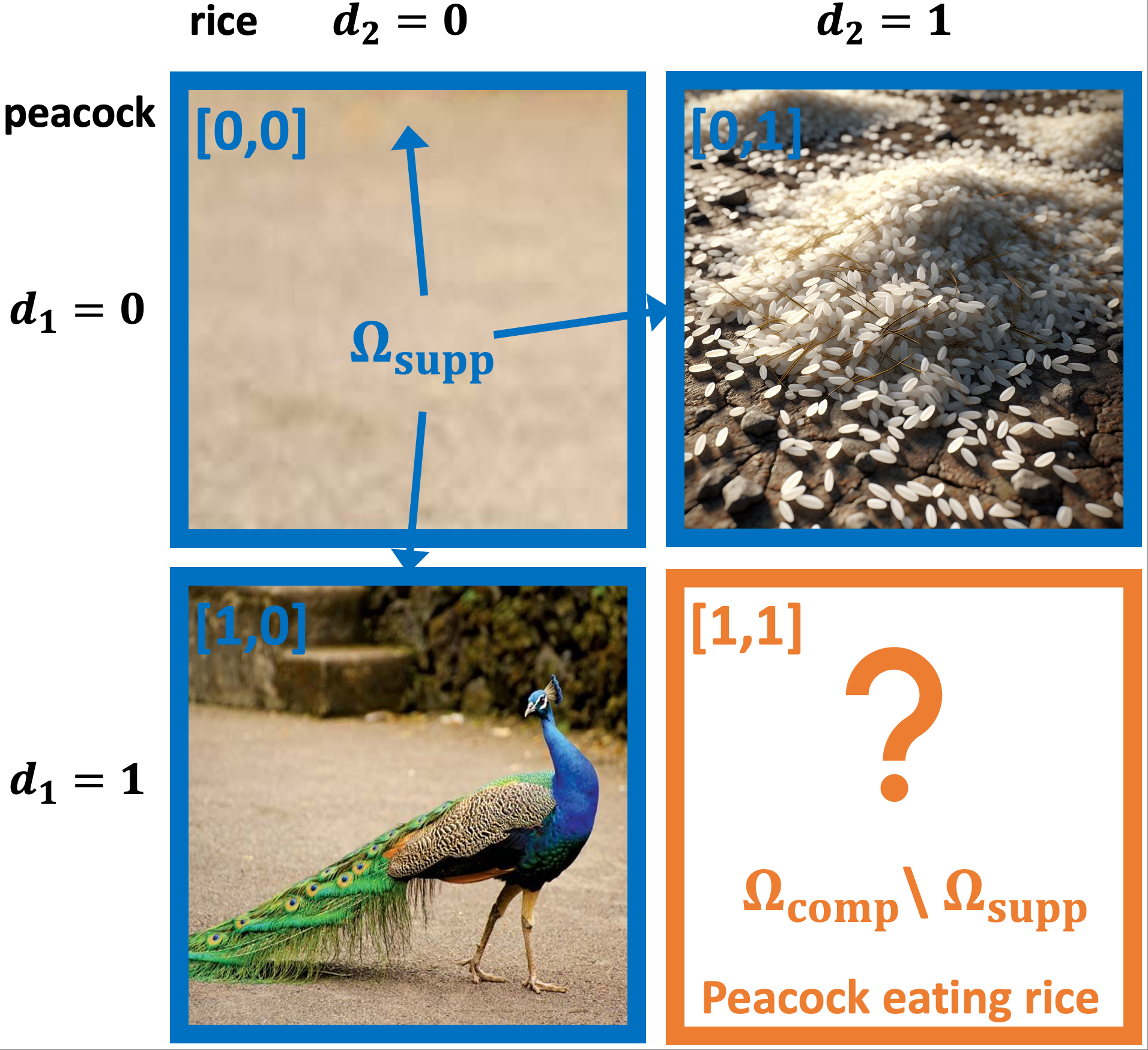}
        \caption{
            \textbf{Compositional generalization.}
            The training support $\Ds$ only contains ``peacock'' and ``rice'' separately. Compositional generalization aims to generate ``peacock eating rice'' in the out-of-support region $\Dcomp \setminus \Ds$. $d_1$, $d_2$ indicate the presences of ``peacock'' and ``rice'' (e.g., $d_1=1$ for ``peacock'', $d_1=0$ for its absence).
        }
        \label{fig:problem_motivation}
    \end{minipage}
    \vspace{-0.3cm}
\end{figure*}

Despite substantial empirical advances~\citep{ramesh2022hierarchical, du2024compositional, liu2022compositional, zhang2024realcompo, hu2024ella, huang2023t2icompbench, bar2023multidiffusion, yang2024mastering, trager2023linear, uselis2025does, okawa2023compositional}, theoretical understanding remains limited and relies on restrictive assumptions about concept interactions.
For instance, \citet{brady2023provably} and \citet{wiedemer2024provable} assume concepts affect separate pixel regions without interaction, while \citet{lachapelle2023additive} proposes additive concept influences in pixel space, later extended by \citet{brady2024interaction} to include second-order polynomial terms. 
Importantly, these contributions generally overlook the hierarchical nature of concepts and cannot capture the complex, nonparametric interactions (like ``beak'' \& ``rice'') that are learned and transferred in analogical reasoning


We formulate the intuition of analogy making with the language of causality, specifically, a latent \textbf{hierarchical} model that encodes fundamental principles of \textbf{modularity} and \textbf{minimal changes} in causality~\citep{pearl2009causal,spirtes2000causation,peters2017elements}.
Causal modularity (or ``invariant mechanism'') posits that a complex system, like image generation, can be decomposed into distinct modules (e.g., ``beak \& rice'') that are autonomous and transferable across different contexts. The minimal-change principle formalizes the idea that an observed difference should result from the fewest possible underlying changes (e.g., ``crown'' in ``chicken'' and ``peacock'' in Figure~\ref{fig:teaser}), making knowledge transfer possible.
Our framework accommodates complex interactions among high-level concepts and their intricate relations in the latent space, beyond prior work~\citep{wiedemer2024provable,lachapelle2023additive,brady2024interaction,brady2023provably}.

We establish identification conditions that allow latent hierarchical models to be learned from observed data, e.g., text-image pairs ubiquitous in image generation tasks. Unlike previous work on identifying latent hierarchical models~\citep{huang2022latent, choi2011learning, anandkumar2013learning, pearl1994probabilistic, dong2023versatile}, our theory does not require linearity or discrete latent variables, thus capable of modeling more complex data distributions. While recent work~\citep{kong2023identification} also addresses nonlinear hierarchical models and identifies concepts in groups, our approach leverages interactions among latent variables to identify \textbf{individual latent concepts} and the \textbf{graphical structure}.
Building on this theoretical foundation, we demonstrate how the abstract concepts of hierarchical levels and modularity can be practically realized by interpreting diffusion timesteps as hierarchical levels and by enforcing an explicit sparsity regularizer on concept attention maps. Our empirical results validate that integrating these theoretically motivated design choices leads to significant improvements in compositional generation.

Please refer to Appendix~\ref{app:related_work} for a more detailed discussion on related work.

\section{Compositional Generalization and Hierarchical Models}
We formally define compositional generalization and introduce the latent hierarchical data-generating process, which lies at the core of our framework. We denote the dimensionality of a multidimensional variable with $n(\cdot)$, the integer set $\{i \}_{i=1}^{n}$ with $[n]$, and all parents of $v$ with $\parents{v}$.

\paragraph{Compositional generalization.}
Let $ \obs $ denote the observed variables $ \obs \in \R^{d_{\obs}} $ of interest (e.g., natural images).
Let $\dis \in \{0, 1, \dots\}^{n(\dis)}$ be discrete variables that control high-level concepts present in the paired data $\obs$ (e.g., ``peacock'').
Then, text-to-image generation entails learning the condition distribution $ p( \obs | \dis ) $, where we specify the discrete concepts $\dis$ through text to generate the corresponding image $\obs$.
However, the training data distribution often lacks data containing certain combinations of concepts, even when each concept appears separately.
In Figure~\ref{fig:problem_motivation}, $d_{1}$ and $d_{2}$ indicate the presence of ``peacock'' and ``rice'' when they take value $1$.
Although we may observe ``peacock'' and ``rice'' in separate images (i.e., the training distribution contains images with $ \dis = [0, 1] $ and $ \dis = [1, 0] $), their composition $ \dis = [1, 1] $ required to produce ``peacock eating rice'' may be absent from the training support $\Ds$.
Since we can only train the model $\hat{p} (\obs | \dis)$ to match the true distribution $ p(\obs | \dis) $ over the support $ \{[0, 0], [0, 1], [1, 0]\} $, the model $\hat{p} (\obs | \dis) $ might produce arbitrary results for the out-of-support input $ \dis = [1, 1] $.
In this context, \emph{compositional generalization} refers to when our model $\hat{p} (\obs | \dis)$, which agrees with the true model $ p(\obs | \dis) $ on the support $ \Ds $, agrees on a strictly larger space $ \Ds \subset \Dcomp $.
We call a set of concepts $\dis$ \textit{composable} if it lies within the compositional space $ \dis \in \Dcomp $.
An important example of $ \Dcomp $ is the Cartesian product space $ \Dc := [\Ds]_{1} \times \dots \times [\Ds]_{n(\dd)}$~\citep{lachapelle2023additive,wiedemer2024compositional} where $ [\Ds]_{i}: = \{ \disc_{i}: \disc \in \Ds \} $ denotes the marginal support of dimension $i$.
In this case, the model should correctly compose concepts that appear separately in the training.


\paragraph{Challenges and motivations.}
Recent work in causal representation learning has increasingly focused on establishing provable conditions for compositional generalization. To address the extrapolation challenge,
prior work~\citep{lachapelle2023additive,wiedemer2024compositional,wiedemer2024provable} proposes additive generating functions, where the joint influence of multiple latent concepts $\lat_{i}$ can be expressed as the sum of their individual influences $ \obs: = \sum_{i} g_{i} (\lat_{i}) $.
While this semi-parametric approach offers certain compositional properties, it fails to adequately model complex interactions among concepts, as it limits all concept interactions to mere addition of their individual pixel values.
More recently, \citet{brady2024interaction} leverage interaction asymmetry properties to partially overcome this limitation. However, their approach still characterizes concept interactions using a restrictive parametric form (polynomials), which may not capture the full range of complex interactions in real-world data.
Thus, it remains a significant challenge to identify natural properties in the data-generating process that can support general concept interactions. 

\paragraph{Causal modularity and minimal changes.}
Humans understand and envision concept compositions through \emph{comparison} and \emph{analogy}~\citep{holyoak1989analogical,gentner1983structure}, cognitive processes that align with causal principles of \emph{modularity} and \emph{minimal changes}~\citep{spirtes2000causation,pearl2009causal,peters2017elements}.

\emph{Causal modularity} is the principle that a system's structure can be broken down into independent, autonomous modules, indicating how high-level concepts decompose into transferable low-level modules. The concept ``peacock'' breaks down into components (i.e., low-level concepts) like ``beak'', ``wings'', and ``colorful tail,'' while ``chicken'' decomposes into ``beak'', ``wings,'' and ``cockscomb'' (Figure~\ref{fig:teaser}). 
These components function as modular building blocks that can be recombined across contexts. The interaction patterns between these components are also transferable---the mechanism by which a ``beak'' interacts with ``rice'' forms a reusable module applicable across different bird species. This architecture enables efficient representation of complex concepts for humans.

The \emph{minimal-change} principle complements modularity by emphasizing that high-level concepts largely share low-level concepts, with only minimal distinguishing features. When comparing ``peacock'' and ``chicken'', both activate many identical low-level concepts (e.g., ``beaks'', ``wings''), differing primarily in specific attributes (``colorful tail'' vs. ``cockscomb''). This overlap facilitates comparison and analogy between related concepts. We intuitively recognize peacocks and chickens as more similar to each other than to fish precisely because they share more low-level concepts.

Together, these properties empower humans to envision novel concept combinations never directly experienced. 
Consider a novel combination ``peacock eating rice'' (Figure~\ref{fig:teaser}). We can readily imagine this because: (1) modularity allows decomposition of ``peacock'' into components including a ``beak'' and enables transfer of the interaction module ``beak \& rice'' observed in ``chicken eating rice'', and (2) the minimal-change principle enables the recognition that peacock beaks share properties with chicken beaks that interact with rice similarly, despite appearance differences (e.g., ``colorful tails''),

This explains our ability for compositional generalization -- we \emph{decompose high-level concepts into transferable low-level modules and leverage the shared features for analogical reasoning}, while accounting for minimal distinguishing features that preserve conceptual uniqueness.

\paragraph{Hierarchical data-generating processes.}
To encode these key properties, we formulate a hierarchical data-generating process to explicitly model concepts at distinct hierarchical levels and their interactions.
Let latent variables be $ \lat:= [ \lat_{1}, \cdots, \lat_{L} ]$, where $L$ denotes the total number of hierarchical levels and $\lat_{l} \in \R^{n(\lat_{l})}$ represents concept variables on the hierarchical level $l \in [L]$.
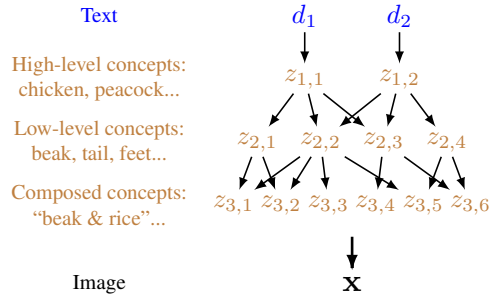
\begin{figure}[t]
    \centering
    \vspace{-0.4cm} 
    \begin{tikzpicture}[scale=.42, line width=0.6pt, inner sep=0.6mm, shorten >=.1pt, shorten <=.1pt]
      \tikzset{
        znode/.style={text=brown},
        dnode/.style={text=blue},
        xnode/.style={text=black},
        every node/.style={align=center},
        edge from parent/.style={draw,->}
      }

      \node[dnode] (d1) at (-1.5,1) {$d_{1}$};
      \node[dnode] (d2) at (1.5,1) {$d_{2}$};
      
      \node[znode] (z11) at (-1.5,-1) {$z_{1, 1}$};
      \node[znode] (z12) at (1.5,-1) {$z_{1, 2}$};
      
      \node[znode] (z21) at (-3,-3) {$z_{2,1}$};
      \node[znode] (z22) at (-1,-3) {$z_{2,2}$};
      \node[znode] (z23) at (1,-3) {$z_{2,3}$};
      \node[znode] (z24) at (3,-3) {$z_{2,4}$}; 
    
      \node[znode] (z31) at (-3.75,-5) {$z_{3,1}$};
      \node[znode] (z32) at (-2.25,-5) {$z_{3,2}$};
      \node[znode] (z33) at (-0.75,-5) {$z_{3,3}$};
      \node[znode] (z34) at (0.75,-5) {$z_{3,4}$};
      \node[znode] (z35) at (2.25,-5) {$z_{3,5}$};
      \node[znode] (z36) at (3.75,-5) {$z_{3,6}$};
      
      \draw[-latex] (d1) -- (z11);
      \draw[-latex] (d2) -- (z12);

      \draw[-latex] (z11) -- (z21);
      \draw[-latex] (z11) -- (z22);
      \draw[-latex] (z11) -- (z23);

      \draw[-latex] (z12) -- (z23);
      \draw[-latex] (z12) -- (z24);
      \draw[-latex] (z12) -- (z22);

      \draw[-latex] (z21) -- (z31);
      \draw[-latex] (z22) -- (z31);
      
      \draw[-latex] (z21) -- (z32);
      \draw[-latex] (z22) -- (z32);
      
      \draw[-latex] (z22) -- (z33);
      \draw[-latex] (z23) -- (z34);
      \draw[-latex] (z22) -- (z35);
      \draw[-latex] (z24) -- (z35);
      \draw[-latex] (z23) -- (z36);
      \draw[-latex] (z24) -- (z36);

      \node[xnode, scale=1.2] (x) at (0,-7.5) {$\obs$};
      
      \draw[-latex, line width=1pt] (0,-6) -- (0,-7) -- (x);

      \node[dnode, align=center, font={\footnotesize}] at (-8,1) {Text};
      \node[znode, align=center, font={\footnotesize}] at (-8,-1) {High-level concepts:\\chicken, peacock...};
      \node[znode, align=center, font={\footnotesize}] at (-8,-3) {Low-level concepts:\\beak, tail, feet...};
      \node[znode, align=center, font={\footnotesize}] at (-8,-5) {Composed concepts:\\``beak \& rice''...};
      \node[xnode, align=center, font={\footnotesize}] at (-8,-7.5) {Image};
    \end{tikzpicture}
    \caption{
        \textbf{A hierarchical data-generating process.}
        We denote the textual description as $\dis$, continuous latent concepts as $\lat$, and the image as $\obs$, where text-image pairs $p(\dis, \obs)$ are observable.
    }
    \label{fig:causal_graph}
    \vspace{-0.4cm}
\end{figure}
We denote the hierarchical graphical model as $ \cG := ( \cV, \cE ) $ where $ \cV:= \dis \cup \lat \cup \obs $ denotes the variable set and $ \cE $ denotes the edge set. \footnote{
    We view multidimensional variables as \emph{sets} when appropriate (e.g., $\obs$ as $\{ \obse_{i} \}_{i \in [d(\obs)]}$). 
}
We present the data-generating process in Eq.\ref{eq:data_generating_process} and Fig.~\ref{fig:causal_graph}.
\begin{align} \label{eq:data_generating_process}
    \late_{1, i} \sim p ( \late_{1, i} | \disc_{i} ), \quad
    v := g_{v} ( \parents{v}, \epsilon_{ v } ),
\end{align}
where $ v \in \cV \setminus \left( \lat_{1} \cup \dis\right) $ represents all non-root variables and $ \epsilon_{v} $ denotes its independent exogenous information. 
The discrete variable $\dis$ (i.e., textual descriptions) directly specifies high-level concepts $\lat_{1}$ (e.g., their presence or specific categories).
For instance, $\disc_{1} = 0$ may indicate the absence of ``peacock'', while $ \disc_{1} = 1 $ and $ \disc_{1} = 2$ might signify two varieties of ``peacocks''. 
We assume that the conditional distribution $p( \late_{1, i} | \disc_{i} = 0)$ is degenerate (i.e., a constant $\late_{1, i} $) to indicate its absence and has identical $ \supp{p( \late_{1, i} | \disc_{i} )} $ when $ \disc_{i} $ takes on different nonzero values to represent different varieties of the same concept $\late_{1, i}$.
We denote model parameters as $ \bm\theta := \left( p(\lat_{1}, \dis), \{ g_{v} \}_{v \in \cV \setminus \left(\lat_{1}, \dis\right) } \right) $. 
For exposition, we refer to $\obs$, $\dis$ as $\lat_{L+1}$, $\lat_{0}$ respectively.

\section{Composition Conditions and Identifiability} \label{sec:theory}

We first demonstrate how compositional generalization can arise from the hierarchical data-generating process present in natural data (Section~\ref{subsec:composition_conditions}).
Then, we show that one can learn such data-generating processes from image-text data $p(\obs, \dis)$ under proper assumptions (Section~\ref{subsec:identification}).

\subsection{Compositional Generalization Conditions} \label{subsec:composition_conditions}

Although the variables $ \{ \lat_{l} \}_{l \in [0, L+1]} $ form a Markov chain over $\lat_{l}$, the first module $ p( \lat_{1} | \dis ) $ could give rise to distinct supports $ \supp{\lat_{1} | \dis} $ across various values of $\dis$ (i.e., missing concepts).
In Figure~\ref{fig:problem_motivation}, the training support $\Ds$ lacks the combination ``peacock eating rice'' $\dis = [1, 1]$, which affects $\supp{\lat_{1}}$ and propagates downstream through $ \supp{\lat_{l}} $ for $l\in[L+1]$.
As children of $\lat_{1}$, variables $\lat_{2}$ only take on values from a more restricted set.
Consequently, the matching between two models $ \bm\theta $ and $ \hat{\bm\theta} $ is only partially supported due to the incompleteness of $\Ds$.
Our goal is to find conditions on the data-generating process $p(\obs|\dis)$ that permit generalization. We find this constraint is on the \emph{causal modules of the generating process}. A complex interaction (e.g., "beak \& rice") is modeled as a transferable module $g_{\late}$ for some latent $\late$. Compositional generalization to ``peacock eating rice'' ($\dis$) is possible if and only if this $g_{\late}$ module has already been learned from another example (e.g., ``chicken eating rice'', $\tilde{\dis}$).

\begin{restatable}[Composition Generalization]{theorem}{compositiongeneral} \label{thm:composition_general} {\ }
We assume the data-generating process \eqref{eq:data_generating_process}.
The discrete concept combination $\dis$ is composable (i.e., $\dis \in \Dcomp$) if for each continuous latent variable $\late \in \lat$, its parents' distribution support $ \supp{\parents{\late} | \dis } $ is contained by $\supp{\parents{\late} | \sdis}$ for some combination $\sdis\in \Ds $ on the support, i.e., $ \supp{\parents{ \late } | \dis } \subseteq \supp{ \parents{ \late } | \sdis } $.
\end{restatable}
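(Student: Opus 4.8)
The plan is to prove the statement by induction over the hierarchical levels $l \in [0, L+1]$, showing that the support condition on each module's parents propagates to a support-containment statement on the module's outputs, and hence eventually on $\obs$ itself. Concretely, I want to establish that for every level $l$ there is a ``reference'' combination $\sdis \in \Ds$ (possibly depending on the node) such that $\supp{\lat_l \mid \dis}$ is matched by the behavior of the model under $\sdis$, and that this is exactly what ``composable'' ($\dis \in \Dcomp$) requires: namely that any two parameter settings $\bm\theta, \hat{\bm\theta}$ agreeing on $\Ds$ must also agree on $p(\obs \mid \dis)$.

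First I would unpack the definition of $\Dcomp$: $\dis$ is composable iff whenever $\hat{p}(\obs \mid \dis')= p(\obs \mid \dis')$ for all $\dis' \in \Ds$, then $\hat p(\obs\mid\dis)=p(\obs\mid\dis)$. So the real object to control is the conditional $p(\obs \mid \dis)$, which by \eqref{eq:data_generating_process} factorizes along $\cG$ as a composition of the mechanisms $g_v$ applied to $p(\lat_1 \mid \dis)$. The key observation is that $p(\obs\mid\dis)$ depends on $\bm\theta$ only through (i) the root module $p(\lat_1\mid\dis)$ and (ii) the restriction of each mechanism $g_v$ to the set of parent values that actually occur, i.e. $g_v$ evaluated on $\supp{\parents{v}\mid\dis}$. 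Therefore, if for each continuous latent $\late$ we have $\supp{\parents{\late}\mid\dis}\subseteq\supp{\parents{\late}\mid\sdis}$ for some $\sdis\in\Ds$, then agreement of $\hat{\bm\theta}$ and $\bm\theta$ on $\Ds$ forces $\hat g_{\late}$ and $g_{\late}$ to agree on all of $\supp{\parents{\late}\mid\dis}$, since that set sits inside a set on which agreement is already known. The root module $p(\lat_1\mid\dis)$ needs a separate, easy argument: the support structure assumed for $p(\late_{1,i}\mid\disc_i)$ (degenerate at $\disc_i=0$, common support for nonzero values) means $p(\late_{1,i}\mid\disc_i)$ is pinned down coordinate-wise by the single-concept combinations already in $\Ds$.

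I would then assemble these facts by a forward induction along a topological order of $\cG$: the inductive hypothesis at node $v$ is that $\hat{\bm\theta}$ and $\bm\theta$ induce the same conditional law of $\parents{v}$ given $\dis$, restricted appropriately to $\supp{\parents{v}\mid\dis}$; the hypothesis on $\parents{v}$ plus the matching of $g_v$ on $\supp{\parents{v}\mid\dis}$ (just established) gives the same conclusion for $v$; pushing this to the sink $\obs = \lat_{L+1}$ yields $\hat p(\obs\mid\dis)=p(\obs\mid\dis)$, i.e. $\dis\in\Dcomp$. A small bookkeeping point is that different nodes may require different reference combinations $\sdis$, but this is fine since the conclusion ``$g_v$ matches on $\supp{\parents{v}\mid\dis}$'' is a per-node statement and they all feed into the same downstream composition.

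The main obstacle I anticipate is not the combinatorial induction but making the ``agreement on a subset of parent values'' step rigorous at the level of distributions rather than functions: one must argue that $\supp{\parents{\late}\mid\sdis}$ being the relevant evaluation region for the $\sdis$-conditional genuinely forces $\hat g_{\late}=g_{\late}$ (as maps, up to null sets) on that region, which uses that $\parents{\late}\mid\sdis$ has full support on $\supp{\parents{\late}\mid\sdis}$ and that the exogenous noise $\epsilon_{\late}$ is shared/independent so the conditional $p(\late\mid\parents{\late})$ determines $g_{\late}$ pointwise a.e. Handling the degenerate (absent-concept) coordinates of $\lat_1$ cleanly inside this framework — so that ``$\disc_i=0$'' contributes a constant that is trivially matched — is the other delicate point, and I would isolate it as a short lemma about the root module before running the induction.
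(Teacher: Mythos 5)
Your combinatorial skeleton (per-node support containment, a node-specific reference combination $\sdis$, and a topological-order induction that assembles module agreements into agreement of the whole model) is exactly the shape of the paper's argument, and that part is sound. The genuine gap is in where you anchor the notion of ``agreement.'' You define composability at the level of the observed conditional, and your key inductive step asserts that agreement of $\bm\theta$ and $\hat{\bm\theta}$ on $\Ds$ ``forces $\hat g_{\late}$ and $g_{\late}$ to agree'' on $\supp{\parents{\late} \mid \sdis}$. If ``agreement on $\Ds$'' means $\hat p(\obs \mid \dis') = p(\obs \mid \dis')$ for all $\dis' \in \Ds$, this inference is false in general: two specifications can be observationally equivalent on the training support while having entirely different latent mechanisms (different graphs, entangled latents), and these can then disagree at the unseen $\dis$. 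Recovering the modules from $p(\obs, \dis)$ is precisely the identifiability problem, which the paper treats separately under Condition~\ref{cond:identification} in Theorem~\ref{thm:identification}; none of those assumptions are available in Theorem~\ref{thm:composition_general}. Your proposed patch --- that full support of $\parents{\late} \mid \sdis$ plus shared independent noise lets the conditional $p(\late \mid \parents{\late})$ determine $g_{\late}$ pointwise a.e. --- does not close this: a conditional distribution does not pin down the mechanism--noise pair uniquely, and the latent conditionals are themselves not observable from $p(\obs \mid \dis)$ without the identifiability machinery.

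The paper sidesteps all of this by defining both hypotheses and conclusion at the mechanism level: ``the models match over $\Ds$'' means $\bm\theta|_{\sdis} = \hat{\bm\theta}|_{\sdis}$, i.e.\ each module satisfies $g_{\late} = \hat g_{\late}$ on $\supp{\parents{\late} \mid \sdis} \times \supp{\bm\epsilon_{\late}}$, and composability at $\dis$ is the same statement with $\dis$ in place of $\sdis$. With that reading, the support containment $\supp{\parents{\late} \mid \dis} \subseteq \supp{\parents{\late} \mid \sdis(\dis)}$ immediately transfers each module's agreement to the region needed at $\dis$ (using that $\bm\epsilon_{\late}$ is independent of $\parents{\late}$ and its law does not depend on $\dis$), and composing the modules gives $\bm\theta|_{\dis} = \hat{\bm\theta}|_{\dis}$ --- essentially your induction, but with nothing left to prove at the distribution-to-function interface. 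To repair your write-up, either adopt the paper's mechanism-level notion of agreement (in which case drop the ``main obstacle'' discussion, as it dissolves), or explicitly import Condition~\ref{cond:identification} and Theorem~\ref{thm:identification} as an extra hypothesis, at the cost of proving a different, stronger statement than the one asked.
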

\paragraph{Hierarchical structures directly enable the modular transfer.}
The key insight from Theorem~\ref{thm:composition_general} is that generalization to a new combination $\dis$ is possible if, for every latent variable $\late$ in the hierarchy, its required inputs ($\parents{\late}$) have already been seen in some training example $\tilde{\dis}$. \emph{Crucially, $\tilde{\dis}$ can be different for each $\late$}. This is the formal basis for analogy: we can learn the ``beak \& rice'' module from ``chicken eating rice'' ($\tilde{\dis}_1$) and the ``colorful tail'' module from ``peacock'' ($\tilde{\dis}_2$), then compose them to generate $\dis$.
This is a less restrictive condition than prior work. 
For example, a simple tree model, where high-level concepts branch into unique, non-interacting low-level concepts, is \emph{a special case of our framework} that recovers the disjoint-influence assumption in \citet{brady2023provably}. \emph{Our general hierarchical model (Figure~\ref{fig:causal_graph}) allows for shared low-level modules (e.g., ``beak'' is a child of both ``peacock'' and ``chicken''), which is precisely what enables modular transfer}. Furthermore, unlike models assuming global parametric forms (e.g., polynomials~\citep{brady2024interaction,lachapelle2023additive}), our interactions $g_{\late}$ are non-parametric modules learned from data.

In addition, the minimal change principle indicates that \emph{high-level concepts could share many low-level modules and differ only in a small set of concepts}.
In Figure~\ref{fig:teaser}, ``chicken'' and ``peacock'' share ``beak'' and ``wings'' and differ in a few concepts like ``colorful tail''.
Consequently, only a small fraction of modules need to transfer, making the composition more plausible.

\paragraph{Composability and sparsity.}
Theorem~\ref{thm:composition_general} highlights the crucial role of the graphical model's \textit{sparsity} for compositional generalization: a sparse model features smaller parental sets $ \parents{\late} $ which impose fewer constraints for the module transfer. 
This makes it more likely to find a $ \sdis \in \Ds $ on the support that includes its parents' support $ \supp{ \parents{\late} | \dis } $ for each variable $\late$. 
Thus, \emph{hierarchical models with sparser graphs offer stronger compositional capability.}
This is a key theoretical insight: compositional generalization is not just about what concepts are learned, but about the \textbf{sparsity of their causal interactions} in the latent hierarchy. This insight provides a clear theoretical motivation for encouraging sparsity during model learning, a principle we implement in Section~\ref{sec:method}.

\subsection{Causal Model Identification} \label{subsec:identification}


Section~\ref{subsec:composition_conditions} shows that a sparse, hierarchical causal model enables compositional generalization. However, this is only useful if such a latent model can be \emph{learned from data}. The challenge is \textbf{identifiability}: can we uniquely recover the true latent concepts $\lat$ and graph structure $\cG$ from only the observed text-image pairs $p(\lat, \dis)$? Without this, a model might learn an entangled, non-modular representation that fails to generalize.
We first define identifiability, which formalizes the equivalent class to which our learned representation recovers the true representation.

\begin{definition}[Component-wise Identifiability] \label{def:componentwise}
    Let $\lat \in \Zz$ and $\hat{\lat} \in \Zz$ be variables under two model specifications $ \bm\theta $ and $ \hat{\bm\theta} $ respectively. We say that $\lat$ and $\hat{\lat}$ are \emph{identified component-wise} if there exists a permutation $\pi$ such that for each $i \in [n(\lat)]$, $ \hat{\late}_{i} = h_{i}(\late_{\pi(i)})$ where $ h_{i} $ is invertible.
\end{definition}
Here, $\bm\theta$ represents the true model and $\hat{\bm\theta}$ represents the learned version.
Under the component-wise identifiability, our learned representation $\hat{z}_{i}$ captures complete information about a single variable $z_{\pi(i)}$ and no information from other variables $ z_{j} $ with $j\neq \pi(i)$.
This notion of identifiability is broadly adopted in the nonlinear independence component analysis literature (ICA)~\citep{hyvarinen2016unsupervised,hyvarinen2019nonlinear}.

In the following, we introduce and interpret conditions of the data-generating process that lead to component-wise identifiability over all the latent variables $\lat$. 

\paragraph{Remarks on the problem and our contribution.}
Identifying the latent \emph{hierarchical} models has long been a challenging task.
Much previous research has focused on hierarchical models with discrete variables~\citep{Pearl88,zhang2004hierarchical,choi2011learning,gu2023bayesian,kong2024learning} or assumes linear relations among variables~\citep{xie2022identification,huang2022latent,dong2023versatile,anandkumar2013learning}.
Unfortunately, both linearity and discreteness could be too restrictive to model complex real-world distributions of interest in this work (e.g., high-dimensional image distributions).
Closely related to our setting is prior work~\citet{kong2023identification} that admits nonlinear relations among the latent variables. 
They utilize conditional independence and sparse graphical conditions to provide identifiability guarantees for subspaces of latent variables, where latent dimensions can be entangled within certain groups.
While informative in many use cases, such subspace identifiability fails to reflect the granular graphical structure among individual concepts across levels.
For instance, multiple concepts at the same level (e.g., ``eyes'' and ``nose'') may be mixed into a single subspace, compromising the transferability of these individual concepts and limiting compositionality.
In contrast, we utilize the auxiliary information (e.g., the discrete concepts $\dis$) and assume that latent variables influence each other in a non-trivial manner, which we formalize in Condition~\ref{cond:identification}-\ref{asmp:linear_independence}. 
These conditions allow us to achieve component-wise identification (as opposed to the subspace identification~\citep{kong2023identification}) and fully identify the graphical structure, which is instrumental for compositional generalization.

\newcommand{\wvectorcontent}{%
&\mathbf{w}( \lat_{l+1}, \lat_{l} ) 
    = \Big(
        \frac{\partial \log p \left(\lat_{l+1} | \lat_{l} \right)}{\partial \late_{l+1, 1} }, \ldots, \frac{\partial \log p \left(\lat_{l+1} | \lat_{l} \right)}{\partial \late_{l+1, n( \lat_{l+1} ) } }, && \nonumber \\ 
        & \quad \frac{\partial^{2} \log p \left( \lat_{l+1} | \lat_{l} \right)}{(\partial \late_{l+1, 1} )^{2} }, \ldots, \frac{\partial^{2} \log p \left( \lat_{l+1} | \lat_{l} \right)}{\partial ( \late_{l+1, n( \lat_{l+1} ) } )^{2} }
    \Big). &&
}

\begin{restatable}[Identification Conditions]{condition}{identificationconditions} \label{cond:identification} {\ }
    \begin{enumerate}[label=\roman*,leftmargin=2em, topsep=0.5pt, partopsep=0pt, itemsep=-0.0em]
    \setlength\itemsep{-0.0em}
    
        \item \label{asmp:invertibility} [Invertibility]:
        There exists a smooth and invertible map $g_{l}: \left( \lat_{l}, \bm{\epsilon}_{l} \right) \mapsto \obs$ for $ l \in [0, L] $.

        \item \label{asmp:smooth_density} [Smooth Density]:
        The probability density function $ p( \lat_{l+1} | \lat_{l} ) $ is smooth.
    
        \item \label{asmp:conditional_independence} [Conditional Independence]: Components in $ \lat_{l+1} $ are independent given $ \lat_{l} $: $ p (\lat_{l+1} | \lat_{l}) = \prod_{n} p ( \late_{l+1, n} | \lat_{l} ) $.

        \item \label{asmp:linear_independence} [Sufficient Variability]: 
        For each value of $\lat_{l+1}$, there exist $2n(\lat_{l+1})+1$ values of $\lat_{l}$, i.e., $\lat_{l}^{(n)}$ with $n=0, 1, \dots, 2n(\lat_{l+1})+1$, such that the $2n( \lat_{l+1} )$ vectors $\ww(\lat_{l+1}, \lat_{l}^{(n)})-\ww(\lat_{l+1}, \lat_{l}^{(0)})$ are linearly independent, where vector $\ww(\lat_{l+1}, \lat_{l})$ is defined as follows:
       \begin{flalign}
            \wvectorcontent
        \end{flalign}
    \end{enumerate}
\end{restatable}

\paragraph{Discussion and interpretation.}
Condition~\ref{cond:identification}-\ref{asmp:invertibility} guarantees that the observed variables $\obs$ fully preserve the information in $\lat$, which is necessary since our goal is to recover $\lat$ from $\obs$.
Intuitively, the information accumulates from top to bottom in the hierarchical model and ultimately manifests as the observed variable $\obs$. This is plausible for many applications where $\obs$ (e.g., images) can be very high-dimensional and information-rich.
This condition is commonly employed in ICA literature~\citep{hyvarinen2016unsupervised,hyvarinen2019nonlinear,khemakhem2020ice,khemakhem2020variational,von2021self,kong2023identification}.
Condition~\ref{cond:identification}-\ref{asmp:smooth_density},\ref{asmp:conditional_independence} are also standard in the ICA literature.
In particular, Condition~\ref{cond:identification}-\ref{asmp:smooth_density} is a mild regularity condition on the conditional distributions, allowing us to measure distribution variations with density function derivatives.
Condition~\ref{cond:identification}-\ref{asmp:conditional_independence} assumes that the statistical dependence among latent variables on the same level originates from higher-level variables.
For instance, the dependence between a dog's ``eye'' and ``nose'' features stems from a higher-level concept like ``breed''.
Condition~\ref{cond:identification}-\ref{asmp:linear_independence} formalizes the intuition of ``sufficient variation'' among the latent variables.
In particular, the distributions of distinct low-level concepts $\late_{l+1, i}$, $\late_{l+1, j}$ \emph{vary differently} in response to their parent variables in $\lat_{l}$. 
For example, low-level concepts like ``eye'' and ``nose'' exhibit different patterns of change when the concept ``face'' varies, which enables humans to recognize them as separate concepts.
This condition is adopted and discussed extensively in prior work~\citep{hyvarinen2019nonlinear,khemakhem2020variational,kong2022partial}.

\begin{restatable}[Causal Module Identification]{theorem}{identification} \label{thm:identification} {\ }
    We assume the data-generating process \eqref{eq:data_generating_process}. 
    Under Condition~\ref{cond:identification}, we attain component-wise identifiability of $ \lat_{l} $ and the graphical structures $ \cG $ up to the index permutation at each level $l$.
\end{restatable}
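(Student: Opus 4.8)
We argue by \emph{induction over the hierarchy levels}, from $\lat_1$ down to $\lat_L$, reducing the identification at each level to a conditional (auxiliary-variable) nonlinear ICA problem whose hypotheses are precisely Conditions~\ref{cond:identification}-\ref{asmp:smooth_density}, \ref{asmp:conditional_independence}, \ref{asmp:linear_independence}. Fix two parameterizations $\bm\theta$ and $\hat{\bm\theta}$ that induce the same distribution of the observables $(\obs,\dis)$; the goal is to show, at every level $l$, that $\hat{\lat}_l$ relates to $\lat_l$ component-wise (Definition~\ref{def:componentwise}) and that the two induced graphs agree.

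\paragraph{The inductive step.}
Fix level $l$ and write $\lat_0:=\dis$. Assume $\lat_{l-1}$ has been identified component-wise (vacuous for $l=1$, since $\lat_0=\dis$ is observed). Under $\hat{\bm\theta}$, the estimate $\hat{\lat}_{l-1}$ is an explicit smooth function of $\obs$ --- the $\lat_{l-1}$-coordinates of $\hat{g}_{l-1}^{-1}(\obs)$, well defined by Condition~\ref{cond:identification}-\ref{asmp:invertibility} --- so $p(\obs\mid\hat{\lat}_{l-1})$ is a model-free object obtainable from the observed law, and, $\hat{\lat}_{l-1}$ being an invertible reparametrization of $\lat_{l-1}$, conditioning on it coincides with conditioning on $\lat_{l-1}$. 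Regard $(\lat_l,\bm{\epsilon}_l)$ --- with $\bm{\epsilon}_l$ collecting all exogenous noises strictly below level $l$ --- as the source vector and $g_l$ as the invertible mixing onto $\obs$. Conditioned on $\lat_{l-1}$, this source vector is fully factorized: $\lat_l\mid\lat_{l-1}$ factorizes by Condition~\ref{cond:identification}-\ref{asmp:conditional_independence} (and at $l=1$ by \eqref{eq:data_generating_process}, where $\late_{1,i}$ depends only on $\disc_i$), while the coordinates of $\bm{\epsilon}_l$ are mutually independent and independent of $(\lat_{l-1},\lat_l)$; crucially, only the $\lat_l$-block is modulated by the auxiliary. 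On that block, Condition~\ref{cond:identification}-\ref{asmp:linear_independence} supplies the required sufficient variability --- already phrased at finitely many auxiliary values, so the discreteness of $\dis$ at $l=1$ is harmless as long as $\dis$ takes the requisite number of distinct values on the observed support, as the condition stipulates. Invoking the auxiliary-variable nonlinear ICA machinery and its partial-identifiability refinement~\citep{hyvarinen2019nonlinear,khemakhem2020variational,kong2022partial} --- equate $p_{\bm\theta}(\obs\mid\lat_{l-1})=p_{\hat{\bm\theta}}(\obs\mid\lat_{l-1})$, change variables from $\obs$ to the source coordinates via $g_l^{-1}$ and $\hat{g}_l^{-1}$, take logarithms, differentiate in the source coordinates, and cancel cross terms using the conditional factorization together with the spanning property in Condition~\ref{cond:identification}-\ref{asmp:linear_independence} --- forces the induced indeterminacy map to preserve the modulated block without entangling it with the noise block, up to a permutation $\pi_l$ and component-wise invertible maps $h_{l,i}$. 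Hence $\hat{\late}_{l,i}=h_{l,i}(\late_{l,\pi_l(i)})$, and the induction closes for all $l\in[L]$.

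\paragraph{Graph recovery.}
With every $\lat_l$ identified component-wise, relabel indices via the permutations $\pi_l$; since component-wise invertible reparametrizations do not alter conditional (in)dependence structure, the conditionals $p(\late_{l+1,j}\mid\lat_l)$ and $p(\late_{1,i}\mid\dis)$ are pinned down up to these relabelings. The edge set of $\cG$ is then read off directly: $\late_{l,i}\to\late_{l+1,j}$ is present iff $p(\late_{l+1,j}\mid\lat_l)$ genuinely depends on $\late_{l,i}$, and $\disc_i\to\late_{1,i}$ is present since $p(\late_{1,i}\mid\dis)$ depends on $\disc_i$ alone (cf.~\eqref{eq:data_generating_process}). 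Thus $\cG$ is recovered up to the per-level permutations, completing the proof.

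\paragraph{Main obstacle.}
The crux is the refinement that separates $\lat_l$ from the co-mixed exogenous noise $\bm{\epsilon}_l$: since $\obs$ mixes \emph{both}, the textbook nonlinear ICA conclusion (which identifies the full source vector up to component-wise maps only when the \emph{whole} vector is modulated) does not directly apply, and one needs the partial/block version establishing that auxiliary-modulated coordinates cannot leak into the unmodulated ones in the indeterminacy --- this is exactly where the spanning property in Condition~\ref{cond:identification}-\ref{asmp:linear_independence} does the real work. A secondary subtlety is the inductive bookkeeping: one must verify that $\hat{\lat}_{l-1}$ is a genuine function of $\obs$ (so $p(\obs\mid\hat{\lat}_{l-1})$ is well defined without reference to the ground-truth model) and an invertible reparametrization of the true $\lat_{l-1}$, so that the level-$l$ ICA instance is well posed and its conclusion transfers back to the true $\lat_{l-1}$ rather than merely to $\hat{\lat}_{l-1}$.
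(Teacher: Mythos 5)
Your proposal is correct and takes essentially the same route as the paper: a top-down induction that reduces each level to the single-level auxiliary-variable nonlinear ICA identifiability result of \citet{kong2022partial} (which the paper invokes as a black-box lemma while you sketch its internal density-matching/derivative argument and the modulated-block vs.\ noise-block separation), using the previously identified level as the conditioning variable. The only minor divergence is graph recovery --- you read off edges from genuine dependence of the identified conditionals, whereas the paper runs a standard causal discovery procedure (PC) on the identified latents --- but both are routine once component-wise identifiability of all levels is established.
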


\paragraph{Proof sketch.}
The crux is leveraging the influences from the high-level to the low-level latent variables in the hierarchical model.
Specifically, we utilize the textual description $\dis$ as the initial source of variation to identify the adjacent concepts $\lat_{1}$.
With $\lat_{1}$ identified, we can then use these variables to identify its children concepts $\lat_{2}$.
This process repeats through each level until we have fully identified all latent variables component-wise with permutation indeterminacy within each level.
Classic causal discovery algorithms (e.g., PC algorithm~\citep{spirtes2000causation}) can then process these identified latent variables to determine the graphical structure.

\paragraph{Implications.}
Theorem~\ref{thm:identification} provides a crucial link: it guarantees that the desirable data structure outlined in Theorem~\ref{thm:composition_general} is not just a theoretical construct, but is \emph{recoverable in practice from the observed distribution $p(\dis, \obs)$}. This two-step finding -- that hierarchy/sparsity enables generalization (Theorem~\ref{thm:composition_general}) and is learnable (Theorem~\ref{thm:identification}) under certain favorable conditions -- forms the principled foundation for our framework.


\begin{figure*}[t!]
    \centering
    \includegraphics[width=\linewidth]{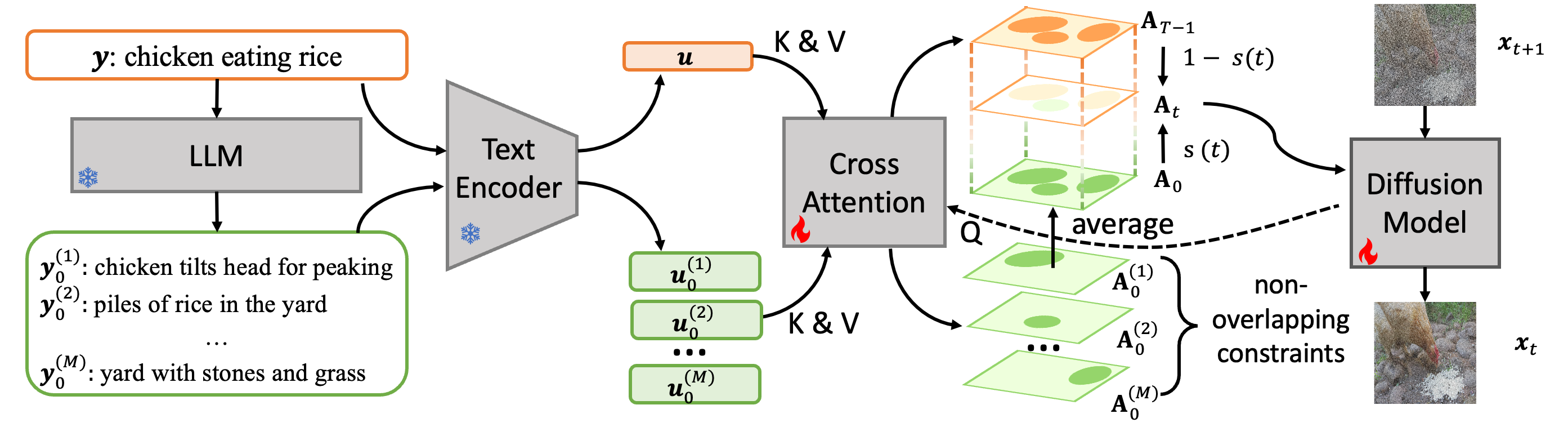}
    \caption{
        \ours.
        We first generate low-level text descriptions $\{ \yy_{0}^{(m)}\}_{m=1}^{M}$ from the original global description $\yy$ and obtain their embeddings $\uu$ and $ \{ \uu_{0}^{(m)}\}_{m=1}^{M} $.
        We average the $M$ low-level cross-attention outputs $ \{ \mA_{0}^{(m)}\}_{m=1}^{M} $ and interpolate it with the global cross-attention-map $\mA_{T-1}$ according to a step-dependent function $s(t)$.
        The resultant $\mA_{t}$ smoothly transitions from $ \mA_{T-1} $ to $ \mA_{0} $ in the generating process.
        We impose non-overlapping constraints to minimize unnecessary interactions among low-level concepts.
    }
    \label{fig:method_diagram}
    \vspace{-0.4cm}
\end{figure*}

\section{Empirical Validation with Diffusion Models} \label{sec:method}

In this section, we integrate key insights from Section~\ref{sec:theory} into existing diffusion models~\citep{rombach2022high} to enhance compositionality.

\paragraph{Hierarchical levels and diffusion steps.}
We conceptualize a diffusion model as a family of models $ \{ f_{t} \}_{t=1}^{T} $, where each $f_{t}$ restores $ \obs_{t+1} $ from its noisier version $ \obs_{t}$ by optimizing the variational evidence lower bound objective~\citep{sohl2015deep,ho2020denoising}:
\begin{align}
\begin{split}
\cL_{\mathrm{d}} &:= \sum_{t=1}^{T-1} \mathrm{KL} \left( q(\obs_{t} | \obs_{t+1}, \obs_{0}) \parallel p_{\!f_{t+1}} ( \obs_{t} | \obs_{t+1}, \yy) \right) \\
                    &\phantom{:=} - \log p_{\!f_{1}} ( \obs_{0} | \obs_{1}, \yy ),
\end{split}
\end{align}
where $q(\obs_{t} | \obs_{t+1}, \obs_{0})$ denotes the reverse diffusion process, $\mathrm{KL}$ stands for KL divergence, and $\yy$ refers to conditioning information (e.g., text).
As interpreted in prior work~\citep{kong2024learning}, $f_{t+1}$ extracts representation $ \lat_{\cS(t+1)} $ from the noisy data $ \obs_{t+1} $, and then employs $ \lat_{\cS(t+1)} $ and additional information $\yy$ to recover $ \obs_{t} $.
Here, $\lat_{\cS(t)}$ denotes latent variables with indices in a $t$-dependent set $\cS(t)$.
Higher noise levels (large $t$) corrupt low-level concepts in $\obs_{t}$, so the representation $ \lat_{\cS(t)} $ only retains high-level concepts.
Therefore, a higher noise level (a larger $t$) corresponds monotonically to a higher concept level $ \cS(t) $.
In Figure~\ref{fig:causal_graph}, if noise level $t$ just suffices to obscure low-level concepts $\lat_{2}$ (e.g., ``beak''), then $ \lat_{\cS(t)} $ corresponds to high-level concepts $\lat_{1}$ (e.g., ``peacock'').

\begin{figure*}[t!]
    \centering
    \includegraphics[width=1.0\linewidth]{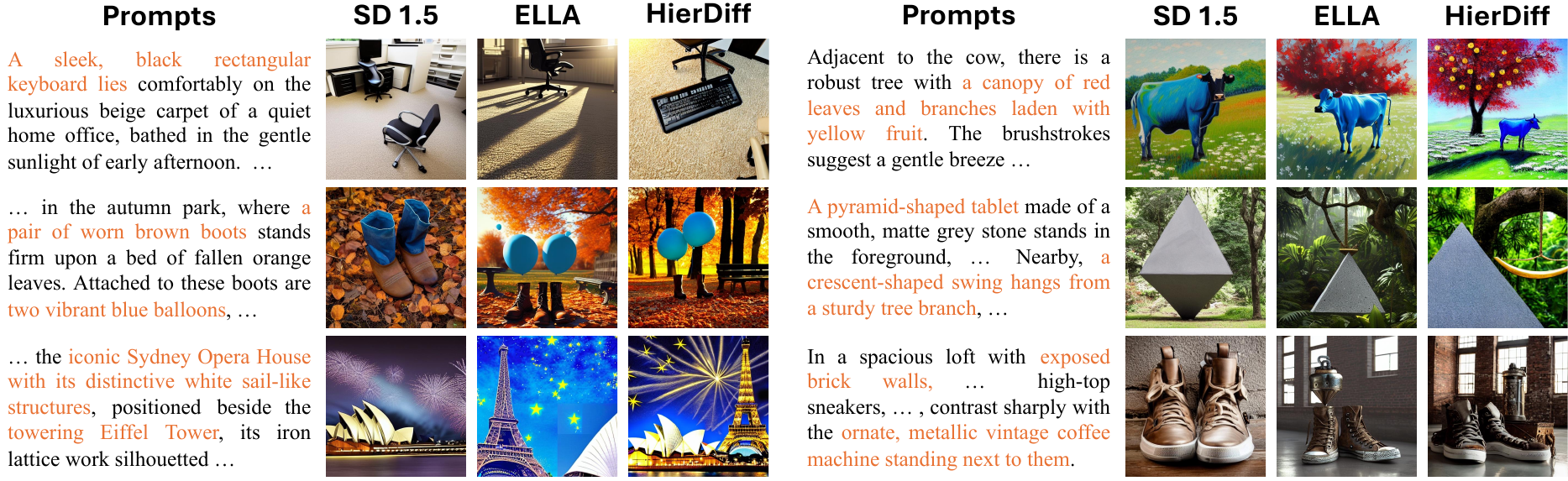}
    \caption{
        \textbf{Text-to-image generation results.}
        We highlight distinguishing tokens. 
    }
    \label{fig:main_text_to_image}
\end{figure*}

\paragraph{Hierarchical concept injection with sparsity control.}
Conventional approaches condition all generation steps with a global text prompt $\yy$~\citep{rombach2022high}.
However, our hierarchical-level interpretation suggests that only the information gap between $ \lat_{S(t+1)} $ and $ \lat_{S(t)} $ (i.e., the new details) is needed at step $t$.
Applying a global, invariant conditioning $\yy$ can limit the model's capacity, since it is compelled to disentangle and extract the desired information at each step. 
Moreover, this approach overlooks the naturally sparse structures in the hierarchical model, which may create unnecessary concept interactions and compromise composability as discussed in Section~\ref{sec:theory}.
Based on these insights, we formulate two key goals to improve existing methods:
\begin{enumerate}
    \item \textbf{Goal 1}: Inject step-specific information $\yy_{t}$;
    \item \textbf{Goal 2}: Encourage sparse interactions among concepts.
\end{enumerate}

\noindent
For \textbf{Goal 1}, we start with the high-level textual description $\yy$ (e.g., ``peacock eating rice''), which corresponds to a high-level concept $\dis$ in our theory. We produce detailed textual descriptions $ \yy_{0}:= \{ \yy^{(m)}_{0} \}_{m=1}^{M} $ for $M$ low-level concepts from the original high-level textual description $ \yy $.
This can be accomplished via a language model, as shown in prior work~\citet{feng2024layoutgpt,lian2023llm,wu2023harnessing,yang2024mastering} (see more in Appendix~\ref{app:lm_discussion}).
We treat these low-level descriptions as the information to be injected at the final step from $\obs_{1}$ to $\obs_{0}$ and the high-level description $\yy$ as information at the initial step from $\obs_{T}$ to $\obs_{T-1}$.
For the intermediate steps $ 0 < t < T-1 $, we interpolate the cross-attention outputs between the global text $ \yy $ and $\obs_{t}$, denoted as $ \mA_{T-1}:= \mathrm{XAttn} ( \obs_{t}, \uu) $ where $\uu$ denotes the text embedding of $\yy$, and cross-attention outputs between low-level descriptions $ \{ \yy_{0}^{(m)} \}_{m=1}^{M} $ and $\obs_{t}$, denoted as $ \mA_{0}^{(m)}:= \mathrm{XAttn} ( \obs_{t}, \uu_{0}^{(m)} ) $ where $\uu_{0}^{(m)}$ is the text embedding of $\yy_{0}^{(m)}$:
\begin{align}
    \begin{split}
    \mA_{t} := (1-s(t)) \cdot \mA_{T-1} + \frac{s(t)}{ M } \cdot \sum_{m=1}^{M} \mA_{0}^{(m)},
    \end{split}
\end{align}
where $s(t)$ is a monotonically decreasing function with $s(0)=1$ and $s(T-1)=0$.
We apply this modified cross-attention $ \mA_{t} $ at step $t$ for conditioning.
In this manner, we control the granularity of the injected information to match the diffusion step (i.e., high-level concepts at large steps).

\noindent
For \textbf{Goal 2}, we impose sparse regularization $\cL_{\mathrm{n}}$ on the overlaps among cross-attention maps $ \{\mH_{0}^{(m)}\}_{m=1}^{M} $ from low-level descriptions $ \{ \yy_{0}^{(m)} \}_{m=1}^{M} $ to minimize unnecessary spatial interactions among the $M$ low-level concepts:
\begin{align}
    \cL_{\mathrm{n}} := \sum_{m, n \in [M]: m \neq n} D \left( \mH_{0}^{(m)}, \mH_{0}^{(n)} \right),
\end{align}
where the DICE loss~\citep{sudre2017generalised,yeung2023calibrating} $ D(\mH_{1},\mH_{1}) := \frac{ 2 \cdot \text{tr} (\mH_{1}\mH_{2}) }{ \norm{\mH_{1}}_{1}+\norm{\mH_{2}}_{1} }$ measures the spatial overlap between attention maps $ \mH_{1} $ and $\mH_{2}$.
Under this regularization, concepts overlap sparsely with each other at each level, promoting sparse connectivity and thus composability.

\noindent
The overall training objective becomes:
\begin{align} \label{eq:training_objective}
    \cL := \cL_{\mathrm{d}} + \lambda \cdot \cL_{\mathrm{n}},
\end{align}
where $\lambda$ controls the regularization $\cL_{\mathrm{n}}$.
We refer to our method as \ours (Figure~\ref{fig:method_diagram}).

\begin{table}[t]
    \caption{
        \textbf{Evaluation results on DPG-Bench \citep{hu2024ella}}. 
        Baseline results are obtained from \citet{hu2024ella}.
        Our results are over three seeds.
    }
\label{tab:benchmark}
\vspace{-0.5cm}
\linespread{3.0}
\renewcommand\arraystretch{1.2}
\renewcommand\tabcolsep{2pt}
\begin{center}
\resizebox{\linewidth}{!}
{
   \begin{tabular}{l||c||cccccc}
\hline
Model & Score & Global & Entity & Attribute & Relation & Other \\
\hline
SD v2 \cite{rombach2022high} & 68.09 & 77.67 & 78.13 & 74.91 & 80.72 & 80.66 \\
PixArt-$\alpha$ \cite{chen2023pixart} & 71.11 & 74.97 & 79.32 & 78.60 & 82.57 & 76.96 \\
Playground v2 \cite{playground-v2}  & 74.54 & 83.61 & 79.91 & 82.67 & 80.62 & 81.22 \\
\hline
SD v1.5 \cite{rombach2022high} & 63.18 & 74.63 & 74.23 & 75.39 & 73.49 & 67.81 \\
ELLA \cite{hu2024ella} & 74.91 & 84.03 & 84.61 & 83.48 & 84.03 & 80.79 \\ \hline
\ours & \textbf{79.28}  & \textbf{85.77} & \textbf{85.15} & \textbf{86.98} & \textbf{86.82}  & \textbf{87.77} \\  \hline
\hline
\end{tabular}
}
\end{center}
\vspace{-0.5cm}
\end{table}

\paragraph{Theory \& practice.}
Our theoretical conditions and implementation are related as follows.

\noindent 1) \emph{Hierarchical processes} (Eq.~\ref{eq:data_generating_process} \& Cond.~\ref{cond:identification}-\ref{asmp:conditional_independence}):
The iterative diffusion chain $\obs_{T} \rightarrow ... \rightarrow \obs_0$ naturally models a hierarchical process. Our time-dependent conditioning injects concepts at the appropriate level (high-level concepts at high-noise steps, low-level at low-noise).

\noindent 2) \emph{Sparse connectivity} (Thm.~\ref{thm:composition_general}): Our theory identifies sparsity as critical for modular transfer. We enforce this via $\cL_\mathrm{n}$ which penalizes spatial overlap in attention maps, as a practical surrogate for encouraging a sparse latent graph.

\noindent 3) \emph{Level-dependent transformations} (Eq.~\ref{eq:data_generating_process}): Time-indexed diffusion models provide the required flexible, level-dependent transformations.

\noindent 4) \emph{Invertibility} (Cond.~\ref{cond:identification}-\ref{asmp:invertibility}): The diffusion model’s reconstruction objective (enforced by $\cL_{\mathrm{d}}$), which trains the model to denoise $\obs_{t}$ back to $\obs_0$, inherently promotes invertibility between exogenous noise, text, and images~\citep{kingma2014autoencoding,khemakhem2020variational}.

\noindent While other conditions (Condition~\ref{cond:identification}-\ref{asmp:smooth_density},\ref{asmp:linear_independence}) are assumptions on the data distribution itself.
While directly verifying the latent graph on real data is challenging, our framework provides a clear map from theory to implementation, our work provides a clear mapping from abstract theoretical principles to concrete implementation choices. The strong empirical performance of \ours, as we will show in Section~\ref{sec:exp}, validates the utility of our framework. Thus, this foundational understanding serves as a roadmap for future progress in compositional generalization.

\section{Experiments} \label{sec:exp}

\begin{table}[t]
    \caption{
            \textbf{Ablation studies on the DPG-Bench \cite{hu2024ella}}.
            TD and SR stand for time dependence and sparsity regularization.
        }

    \vspace{-0.2cm}
        \label{tab:ablation}
        \renewcommand\arraystretch{1.2}
\renewcommand\tabcolsep{2pt}
        \begin{tabular}{l||cc||c}
        \hline
		Metrics & -w/o TD & -w/o SR & \ours \\
		\hline
		Global & 85.15 $\pm$ 2.26 $\downarrow$ & 83.09 $\pm$ 2.49 $\downarrow$ & 85.77 $\pm$ 1.21 \\
		Entity & 84.95 $\pm$ 0.52 $\downarrow$ & 86.44 $\pm$ 0.58 & 85.15 $\pm$ 0.66 \\
		Attribute & 86.31 $\pm$ 0.24 $\downarrow$ & 86.56 $\pm$ 0.82 $\downarrow$ & 86.98 $\pm$ 0.09 \\
		Relation & 86.29 $\pm$ 1.03 $\downarrow$ & 87.14 $\pm$ 0.59 & 86.82 $\pm$ 0.86 \\
		Other & 85.64 $\pm$ 0.90 $\downarrow$ & 85.56 $\pm$ 0.60 $\downarrow$ & 87.77 $\pm$ 1.45 \\
		\hline
        \end{tabular}
\end{table}

\begin{figure}[t]
    \centering
        \includegraphics[width=\linewidth]{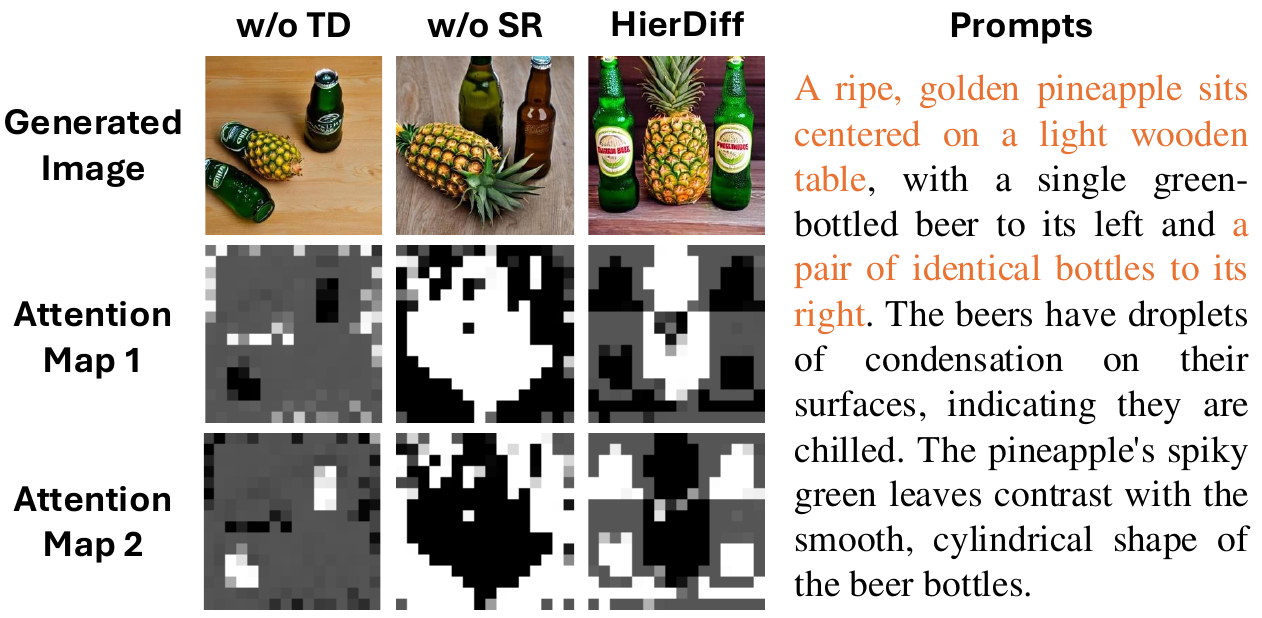}
        \caption{
            \textbf{Ablation studies.} 
        }
        \label{fig:main_ablation}
    \vspace{-0.4cm}
\end{figure}

\begin{figure*}[h!]
    \centering
    \begin{minipage}{0.66\linewidth}
        \centering
        \includegraphics[width=\linewidth]{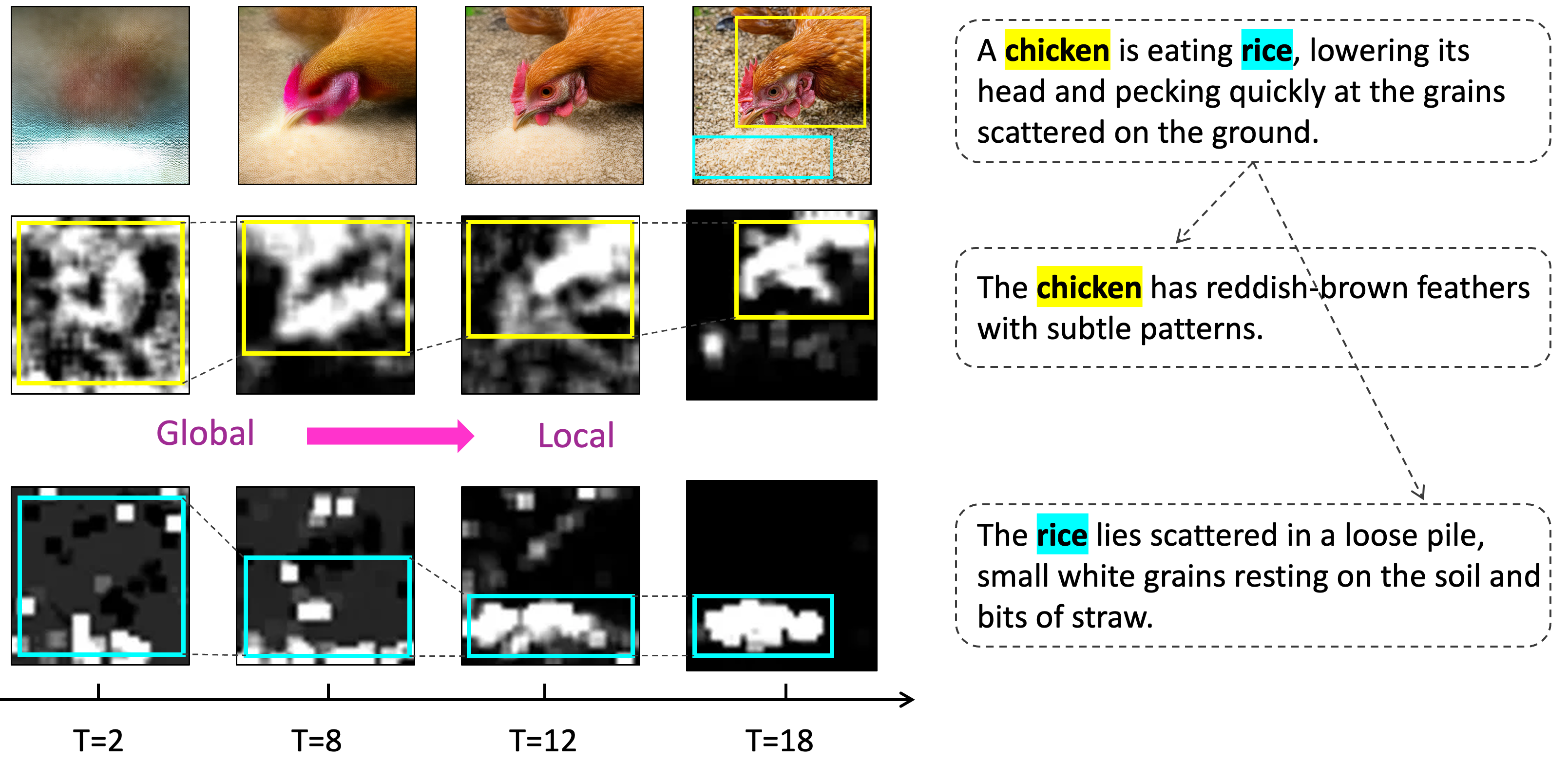}
        \caption{
            \textbf{Local cross-attention maps over steps.}
        }
        \label{fig:diffusion_step_attn}
    \end{minipage}
    \hfill
    \begin{minipage}{0.3\linewidth}
        \centering
        \captionof{table}{ \small
            \textbf{Comparison with SOTA on DPG-Bench~\citep{hu2024ella}}.
            Our approach applies to large-scale diffusion transformers beyond UNet-based models.
        }
        \label{tab:comp_sota}
        \begin{tabular}{l |c}
        \hline
        \textbf{Method} & \textbf{DPG $\uparrow$} \\
        \hline
        DALLE 3~\citep{dalle3_2023}      & 83.5 \\
        SD3-medium~\citep{esser2024scaling} & 84.1 \\
        FLUX-dev~\citep{flux2024}        & 84.0 \\
        FLUX-schnell~\citep{flux2024}    & 84.8 \\
        SANA-1.0~\citep{xie2024sana}     & 83.6 \\  
        SANA-1.5~\citep{xie2025sana}     & 84.7 \\   
        \hline
        \ours-DiT                       & \textbf{84.9} \\
        \hline
        \end{tabular}
    \end{minipage}
    \vspace{-0.3cm}
\end{figure*}

\paragraph{Setup.}
We fine-tune \ours from Stable Diffusion v1.5 \cite{rombach2022high}. 
Following \citet{hu2024ella}, we replace the CLIP text encoder with FLAN-T5-xl~\citep{raffel2020exploring} for enhanced text understanding capabilities, which we freeze during training.
For training, we use the public LayoutSAM dataset~\cite{zhang2024creatilayout}, which contains both the high-level text $\yy$ and corresponding low-level, local descriptions $\yy^{(m)}_{0}$.
At test time, given text $\yy$, we apply QWEN-v2.5~\cite{yang2024qwen2} to generate low-level local text descriptions similar to the training dataset (details in App.~\ref{app:lm_discussion} for an analysis of the LLM's reliability and robustness).
We adopt the interpolation function $s(t) = \cos \left( \frac{ \pi \cdot t }{ 2 (T-1) }\right)$, the number of local concepts $M=3$, and weight $\lambda=1e-4$ throughout our experiments.
We adopt DPG-Bench~\citep{hu2024ella}, which introduces five metrics, namely, Global, Entity, Attributes, Relation, and Other. 
The dataset comprises $1,065$ text prompts, each involving multiple objects/concepts with various relations.
More details are in Appendix~\ref{app:setup}.

\paragraph{Comparison with baselines.}
In Table~\ref{tab:benchmark},
\ours outperforms baseline methods across all evaluation metrics, demonstrating its superior capability in handling complex prompts involving multiple concepts and relationships.
Figure~\ref{fig:main_text_to_image} visualizes the results (more in Appendix~\ref{app:more_examples}).
For instance, in the case of the first prompt, only \ours successfully captures the ``keyboard'' concept and correctly renders its attributes (e.g., ``black'', ``sleek''), while baselines completely overlook this concept.

\paragraph{Ablation studies.}
We conduct ablation studies by sequentially removing the sparsity regularization $\cL_{\mathrm{n}}$ (w/o SR) and then the time dependence (w/o TD).
Table~\ref{tab:ablation} shows that both components contribute positively to the overall performance.
The `w/o TD' baseline validates our hierarchical approach against a simpler, non-hierarchical alternative that uses only the single global text prompt $\yy$ for all diffusion steps.
Notably, the time dependence significantly aids the model to understand complex relations among concepts (``Relation'' from $86.29$ to $87.14$), demonstrating the benefits of hierarchical structure to organize multiple concepts.
The sparsity regularization allows for precise control of individual concepts.
Figure~\ref{fig:main_ablation} visually demonstrates these findings.
Without sparsity constraints, the model's attention map lumps the two bottles together.
Comparing the attention maps from two local prompts, we observe that the sparsity constraint reduces the overlapping areas, enabling the model to control concepts separately.
Without the time dependence, the model fails to capture the relation between concepts, resulting in a confused mixture of ``pineapple'' and ``beer''.
See more examples in Appendix~\ref{app:more_ablation}.

\paragraph{Large-scale networks.}
To validate the scalability of our approach, we extend the implementations from the U-Net architecture to diffusion transformers with 4.8B parameters. As shown in Table~\ref{tab:comp_sota}, we can achieve comparable performance with \emph{billion-scale} models. Details in Appendix~\ref{app:setup}.

\paragraph{Visualization of composition.}
Figure~\ref{fig:diffusion_step_attn} visualizes the cross-attention maps from two low-level text descriptions (``chicken'' and ``rice'') across the diffusion steps with our model.
We can observe that under the two local-level text descriptions gradually shift from dispersed global attentions to more focused local attentions, and the intersection remains minimal. This verifies that our method can indeed facilitate composition by proper decomposition and re-composition with minimal interference.

\section{Conclusion and Limitations}
In this work, we connect compositional generalization with humans' cognitive process of drawing analogies.
We formalize this process via a hierarchical latent model that embodies causal modularity and minimal-change principles. 
Our framework accommodates complex concept interactions without restrictive assumptions. These theory insights lead to \ours, a T2I model that possesses competitive composition capabilities. 
\textbf{Limitations.}
Condition~\ref{cond:identification}-\ref{asmp:conditional_independence} assumes no direct causal influences among variables on each hierarchical level, which may restrict the representative power.
One may mitigate this with additional hierarchical levels to convert within-level to cross-level influences, or alternatively, consider more involved conditions~\citep{zhang2024causal}.

\clearpage

{
    \small
    \bibliographystyle{configurations/ieeenat_fullname}
    \bibliography{references}
}

\clearpage
\onecolumn
\appendix

\begin{center}
    \Large\textbf{Appendix}
\end{center}


\section{Related Work} \label{app:related_work}

\paragraph{Compositional generalization.}
Compositional generalization has garnered significant attention from the generative model community, especially for text-to-image generation.
One avenue explores fine-tuning text-to-image models by incorporating feedback from image understanding systems as a form of reward~\citep{huang2024t2i,xu2024imagereward,sun2023dreamsync,fang2023boosting}.
However, this strategy may be limited by the text comprehension capabilities of models like CLIP.
Another approach involves adjusting the models' cross-attention mechanisms to align with the spatial and semantic details specified in the prompts~\citep{liu2022compositional,bar2023multidiffusion,li2023divide,rassin2024linguistic,chefer2023attend,feng2023trainingfree,chen2024training,xie2023boxdiff,kim2023densediffusion}. 
This approach relies on the interpretability of the foundational models and often results in only broad, suboptimal control over the generated images.
By leveraging the planning and reasoning strengths of language models, researchers have also broken down complex prompts into multiple regional descriptions, providing more precise conditions to guide the image generation process~\citep{cho2023visual,feng2023ranni,wang2024divide,yang2024mastering,lian2023llm,feng2024layoutgpt}. 
This decomposition aids in creating images that more accurately reflect the detailed components of the prompts.
These methods operate at the inference time and do not fundamentally learn disentangled concepts. 
Recent work~\citep{hu2024ella,wu2023harnessing} utilizes diffusion timesteps to modify the text embedding for refined generation control.
Nevertheless, \citet{hu2024ella} do not consider the spatial relations among concepts and fully rely on the pre-trained diffusion model's capacity.
\citet{wu2023harnessing} introduce additional inference-time optimization overhead and depend on the CLIP score as the optimization objective, which limits the generation quality with CLIP's capacity.
Guided by our theoretical insights, our work imposes proper constraints and modifications on the cross-entropy to learn disentangled concepts and their relations.

Although empirical studies are abundant in the field~\citep{du2024compositional, liu2022compositional, zhang2024realcompo, hu2024ella, huang2023t2icompbench, bar2023multidiffusion, yang2024mastering}, theoretical understanding remains limited and often hinges on restrictive assumptions about concept interactions. 
While recent generative~\citep{okawa2023compositional} and discriminative~\citep{uselis2025does,trager2023linear} models show compositional structures can emerge with scale, our work seeks to formally characterize the underlying causal data structures that guarantee such generalization.
From the theoretical standpoint, \citet{brady2023provably, wiedemer2024provable} consider concepts that affect disjoint pixel regions, effectively eliminating interaction between them. \citet{lachapelle2023additive} models the influences of concepts on the pixel space as purely additive, an approach that \citet{brady2024interaction} extends to include second-order polynomial terms. Additionally, \citet{wiedemer2024compositional} assumes direct access to the function governing concept interactions. These theoretical works also tend to overlook the varying levels of abstraction among concepts and their relationships within the latent space.
In contrast, thanks to the hierarchical structure, our theory admits compositions of transformations across hierarchical levels, allowing for complex interaction among concepts at different hierarchical levels.
Our sparsity regularizer ($\cL_{\mathrm{n}}$ in Eq.~\ref{eq:training_objective}) is related to the ``interaction asymmetry'' regularizer of, which also penalizes attention overlap, though \citet{brady2024interaction} use a pixel-wise product loss for VAE latent slots rather than a set-based DICE loss for time-dependent, text-conditioning in diffusion.

\paragraph{Latent hierarchical model identification.}
Modeling complex real-world data requires capturing hierarchical structures among latent variables. Prior work has explored identification conditions for such hierarchies with continuous latent variables influencing each other linearly~\citep{xie2022identification,huang2022latent,dong2023versatile,anandkumar2013learning}. 
Other studies focus on fully discrete cases, limiting their applicability to continuous data like images~\citep{Pearl88, zhang2004hierarchical, choi2011learning, gu2023bayesian}. Moreover, latent tree models connect variables through a single undirected path \citep{Pearl88,zhang2004hierarchical,choi2011learning}, which may oversimplify complex relationships.
Closely related to ours, \citet{kong2023identification} address nonlinear, continuous latent hierarchical models.
However, their framework cannot identify latent variables component-wise, leaving room for concept entanglement.
In contrast, we provide component-wise identifiability for latent variables and the graphical structures, along with transparent conditions for the data-generating process.

\section{Proofs for Theoretical Results}

\subsection{Proof for Theorem~\ref{thm:composition_general}}

\compositiongeneral*

\begin{proof}
    By definition, the concept combination $\dis$ is composable (i.e., $ \dis \in \Dcomp $) when the two alternative model specifications $ \bm\theta $ and $\hat{\bm\theta}$ agree on this specific $\dis$, i.e., $ \hat{g}_{z} = g_{z} $ for any $\late \in \lat$ over its inputs' support $ \cS_{\late} (\dis): = \supp{ \parents{z} | \dis } \times \supp{ \bm\epsilon_{z} } $. 
    We note that each exogenous variable $ \epsilon_{\late} $ is independent of $ \parents{\late} $ and its distribution remains invariant to the discrete variable $\dis$.
    We denote this relation as $ \bm\theta|_{\dis} = \hat{\bm\theta}|_{\dis} $.

    To derive this relation, we first show that under the assumption of the hierarchical data-generating process \eqref{eq:data_generating_process}, the specific model $\bm\theta := \left( p(\lat_{1}, \dis), \{ g_{v} \}_{v \in \cV \setminus \left(\lat_{1}, \dis\right) } \right) $'s behavior on the discrete concept space $ \Dcomp $ is fully determined by its behavior on the support $\Ds$.
    That is, if two specifications $\bm\theta$ and $\hat{\bm\theta}$ follow the hierarchical model assumption \eqref{eq:data_generating_process} and their behavior match over the support $\Ds$, this agreement would extend to $ \Dcomp $: $\forall \tilde{\dis} \in \Ds, \bm\theta|_{\tilde{\dis}} = \hat{\bm\theta}|_{\tilde{\dis}} \implies \forall \dis \in \Dcomp, \bm\theta|_{ \dis } = \hat{\bm\theta}|_{\dis}$.

    To this end, we assess the elementary generating function $ \late := g_{\late} ( \parents{ \late }, \epsilon_{\late} ) $ for every $\late \in \lat $ present in the hierarchical model.
    Although latent variables $ \{ \lat_{l} \}_{l \in [L+1]} $ form a Markov chain, the first module $ p( \lat_{1} | \dis ) $ may yield distinct supports $ \supp{\lat_{1} | \dis} $ across various values of $\dis$ (e.g., $\disc = 0$ for absence of the concept).
    Consequently, the matching of two models $ \bm\theta $ and $ \hat{\bm\theta} $ is only partially supported and depends on the specific value of $\dis$.
    We characterize a potentially larger composable space $ \Dcomp $ given their matching over the training support $\Ds$. 
    Under the theorem condition, we have $\supp{\parents{ \late } | \dis} $ at the specific $\dis$ is fully contained by $\supp{ \parents{ \late } | \sdis (\dis) }$ at some $ \sdis (\dis) \in \Ds$ dependent on $\dis$, i.e., 
    \begin{align} \label{eq:support_containment}
        \supp{\parents{ \late } | \dis} \subseteq \supp{ \parents{ \late } | \sdis(\dis) }.  
    \end{align}
    As the two models $ g_{\late} $ and $ \hat{g}_{\late} $ match over the discrete support $ \Ds $, this equality relation in \eqref{eq:support_containment} implies that this equality extends to $\sdis (\dis)$:
    \begin{align} \label{eq:matching}
        g_{\late} = \hat{g}_{\late}, \forall (\parents{\late}, \bm\epsilon_{\late}) \in \cS_{\late} (\sdis(\dis)).
    \end{align}
  
    As the relation in \eqref{eq:matching} holds true for all modules of $\bm\theta$ and $\hat{\bm\theta}$, the equality extends to the entire hierarchical model, i.e., $ \bm\theta|_{\dis} = \hat{\bm\theta}_{\dis} $ for $ \dis \in \Dcomp $, which concludes our proof.

\end{proof}



    

\subsection{Proof for Theorem~\ref{thm:identification}}

\renewcommand{\wvectorcontent}{%
&\mathbf{w}( \lat_{l+1}, \lat_{l} ) 
    = \Big(
        \frac{\partial \log p \left(\lat_{l+1} | \lat_{l} \right)}{\partial \late_{l+1, 1} }, \ldots, \frac{\partial \log p \left(\lat_{l+1} | \lat_{l} \right)}{\partial \late_{l+1, n( \lat_{l+1} ) } }, \quad \frac{\partial^{2} \log p \left( \lat_{l+1} | \lat_{l} \right)}{(\partial \late_{l+1, 1} )^{2} }, \ldots, \frac{\partial^{2} \log p \left( \lat_{l+1} | \lat_{l} \right)}{\partial ( \late_{l+1, n( \lat_{l+1} ) } )^{2} }
    \Big).
}

\identificationconditions*
\identification*

\begin{proof}
    We introduce Lemma~\ref{lemma:single_level_identification} from \citet{kong2022partial}, which identifies a trivial hierarchical model with only one latent level, i.e., $L=1$.

    \begin{restatable}[Single-level Identification~\citep{kong2022partial}]{lemma}{identification} \label{lemma:single_level_identification} {\ }
        We assume the following data-generating process \eqref{eq:data_generating_process}:
        \begin{align}\label{eq:one_level_data_generating_process}
            \lat \sim p(\lat | \uu), \quad \bm\epsilon \sim p(\bm\epsilon), \quad \obs := g( \lat, \bm\epsilon ),
        \end{align}
        where $\bm\epsilon$ refers to the exogenous variable independent of $\lat$ and $g$ stands for the generating function.
        Under Condition~\ref{cond:identification} with $L=1$ and $\lat_{0} = \uu$, we attain component-wise identifiability of $ \lat_{1} $.
    \end{restatable}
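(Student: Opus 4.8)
The plan is to adapt the now-standard nonlinear-ICA identifiability argument (cf.\ \citet{hyvarinen2019nonlinear,khemakhem2020variational,kong2022partial}) to this single-level conditional model under Condition~\ref{cond:identification} with $L=1$ and $\lat_0 = \uu$. Let $\bm\theta = (p(\lat|\uu), g)$ and $\hat{\bm\theta} = (\hat p(\lat|\uu), \hat g)$ be any two specifications of \eqref{eq:one_level_data_generating_process} that induce the same observed conditional $p(\obs|\uu)$ for every $\uu$. By Condition~\ref{cond:identification}-\ref{asmp:invertibility}, $g$ and $\hat g$ are smooth invertible, so $\phi := \hat g^{-1}\circ g$ is a diffeomorphism carrying $(\lat, \exo)$ to $(\hat{\lat}, \hat{\exo})$; write $\hat{\late}_j = \phi_j(\lat, \exo)$ for $j \in [n(\hat{\lat})]$. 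Applying change-of-variables to $p(\obs|\uu) = \hat p(\obs|\uu)$, taking $\log$, and subtracting the same identity at a reference value $\uu^{(0)}$, the $\uu$-independent Jacobian terms cancel, and since $\exo$ is independent of $\uu$ in \eqref{eq:one_level_data_generating_process} the $\log p(\exo)$ and $\log\hat p(\hat{\exo})$ terms cancel as well. Using conditional independence (Condition~\ref{cond:identification}-\ref{asmp:conditional_independence}) to factorize $\log p(\lat|\uu) = \sum_i \log p(\late_i|\uu)$, this leaves the core identity
\[
\sum_{i=1}^{n(\lat)}\big[\log p(\late_i|\uu) - \log p(\late_i|\uu^{(0)})\big] \;=\; \sum_{j=1}^{n(\hat{\lat})}\big[\log\hat p(\hat{\late}_j|\uu) - \log\hat p(\hat{\late}_j|\uu^{(0)})\big],
\]
which holds for all $(\lat,\exo)$ and all $\uu$; the left side is a function of $\lat$ alone, the right side of $\phi(\lat,\exo)$, and Condition~\ref{cond:identification}-\ref{asmp:smooth_density} makes every term differentiable.

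Next I would eliminate the exogenous part. Differentiating the core identity in $\exog_k$ kills the left side and gives $\sum_j \partial_{\hat{\late}_j}\!\big[\log\hat p(\hat{\late}_j|\uu) - \log\hat p(\hat{\late}_j|\uu^{(0)})\big]\,\partial_{\exog_k}\phi_j = 0$ for every $\uu$. Read as an inner product, the vector $(\partial_{\exog_k}\phi_1,\dots,\partial_{\exog_k}\phi_{n(\hat{\lat})})$ is orthogonal, for every $\uu$, to the first $n(\hat{\lat})$ coordinates of $\ww(\hat{\lat},\uu) - \ww(\hat{\lat},\uu^{(0)})$. By Condition~\ref{cond:identification}-\ref{asmp:linear_independence} there are values $\uu^{(1)},\dots,\uu^{(2n(\hat{\lat}))}$ making the $2n(\hat{\lat})$ difference vectors linearly independent in $\R^{2n(\hat{\lat})}$; hence their projections onto the first $n(\hat{\lat})$ coordinates span $\R^{n(\hat{\lat})}$ --- otherwise all $2n(\hat{\lat})$ vectors would lie in a subspace of dimension at most $2n(\hat{\lat})-1$. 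Therefore $\partial_{\exog_k}\phi_j = 0$ for all $j,k$, so $\hat{\lat} = \bm h(\lat)$ for a smooth map $\bm h$; running the symmetric argument on $g^{-1}\circ\hat g$ shows $\bm h$ is a bijection, so $n(\hat{\lat}) = n(\lat) =: n$.

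The last step upgrades $\bm h$ to a coordinate-wise reparametrization. Rewriting the core identity in $\lat$-coordinates, $\sum_i [\log p(\late_i|\uu) - \log p(\late_i|\uu^{(0)})] = \sum_j [\log\hat p(h_j(\lat)|\uu) - \log\hat p(h_j(\lat)|\uu^{(0)})]$, and differentiating in $\late_a$ and $\late_b$ for $a\neq b$ annihilates the left side, leaving
\[
\sum_{j=1}^{n}\Big[\partial^2_{\hat{\late}_j}\!\big(\log\hat p(h_j(\lat)|\uu) - \log\hat p(h_j(\lat)|\uu^{(0)})\big)\,(\partial_a h_j)(\partial_b h_j) \;+\; \partial_{\hat{\late}_j}\!\big(\log\hat p(h_j(\lat)|\uu) - \log\hat p(h_j(\lat)|\uu^{(0)})\big)\,\partial^2_{ab}h_j\Big] = 0 .
\]
This says the vector with entries $\partial^2_{ab}h_j$ (first $n$ slots) and $(\partial_a h_j)(\partial_b h_j)$ (last $n$ slots) is orthogonal to every $\ww(\hat{\lat},\uu^{(m)}) - \ww(\hat{\lat},\uu^{(0)})$; since those span $\R^{2n}$ it must vanish. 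In particular $(\partial_a h_j)(\partial_b h_j) = 0$ for all $j$ and all $a\neq b$, so each row of the Jacobian $J_{\bm h}$ has at most one nonzero entry. As $\bm h$ is a diffeomorphism $J_{\bm h}$ is everywhere invertible, hence a generalized permutation matrix, and by continuity over the (connected) support the underlying permutation $\pi$ is constant. Thus $\hat{\late}_j = h_j(\late_{\pi(j)})$ with each $h_j$ invertible, i.e.\ $\lat_1$ is identified component-wise in the sense of Definition~\ref{def:componentwise}.

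The main obstacle is the bookkeeping in the change-of-variables step: because $\obs$ may have strictly larger dimension than $\lat$, the diffeomorphism $\phi$ acts on the enlarged space $(\lat,\exo)$, and one must carefully track which coordinates each log-density depends on so that the $\exo$-terms genuinely cancel and the ``left side depends only on $\lat$'' claim is justified. The small linear-algebra fact that $2n$ linearly independent vectors in $\R^{2n}$ project to a spanning set of $\R^{n}$ is exactly what lets Condition~\ref{cond:identification}-\ref{asmp:linear_independence} (phrased with both first and second derivatives) cover the exogenous-elimination and the coordinate-wise steps simultaneously; verifying this and the genericity of the chosen $\uu$-values is routine. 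Since the statement is quoted from \citet{kong2022partial}, an alternative is simply to invoke their theorem after checking that Condition~\ref{cond:identification} instantiates their hypotheses.
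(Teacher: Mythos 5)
The paper never actually proves this lemma: it is imported wholesale from \citet{kong2022partial}, and the appendix's only ``proof'' of it is the citation. Your reconstruction is therefore doing strictly more work than the paper, and it follows the right template --- the change-of-variables/log-derivative argument of \citet{hyvarinen2019nonlinear,khemakhem2020variational} as instantiated in \citet{kong2022partial}: cancel the $\uu$-independent Jacobian and noise terms against a reference $\uu^{(0)}$, kill the $\exo$-dependence using the first-derivative block of the variability condition, then take cross second derivatives to force each row of $J_{\bm h}$ to have a single nonzero entry, hence a generalized permutation. The projection fact you rely on (that $2n$ linearly independent vectors in $\R^{2n}$ cannot all have their first-$n$-coordinate projections confined to a proper subspace of $\R^{n}$) is correct and is exactly why Condition~\ref{cond:identification}-\ref{asmp:linear_independence} is phrased with both first and second derivatives.

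One direction-of-argument wrinkle should be fixed before this is airtight. As written, both of your elimination steps end with orthogonality to difference vectors built from the \emph{estimated} densities $\hat p(\hat\late_j\,|\,\uu)$, so you are invoking Condition~\ref{cond:identification}-\ref{asmp:linear_independence} for $\hat{\bm\theta}$, whereas the condition is stated as a property of the true data-generating process. Either add the (standard, but currently unstated) assumption that the learned model is constrained to a class satisfying Condition~\ref{cond:identification}, or run the argument with $\psi := g^{-1}\circ\hat g$, i.e., write $\lat = \bm h^{-1}(\hat\lat)$ and differentiate with respect to $\hat{\epsilon}_k$ and $\hat\late_a,\hat\late_b$; then the linear combinations involve derivatives of the true $\log p(\late_i\,|\,\uu)$, the assumed variability applies directly, and invertibility transfers the conclusion from $J_{\bm h^{-1}}$ to $J_{\bm h}$. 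With that repair (plus the connected-support assumption you already flag for the constancy of $\pi$), the proof is sound and matches the cited source, so your closing alternative --- simply invoking \citet{kong2022partial} after checking the hypotheses --- is in fact what the paper does.
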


    In the general hierarchical case, we view the observed discrete variable $\dis$ as the top-level variable $\uu$ in \eqref{eq:one_level_data_generating_process} as the starting point.
    Lemma~\ref{lemma:single_level_identification} implies the component-wise identifiability of $\lat_{1}$.
    We then iteratively apply Lemma~\ref{lemma:single_level_identification} to identify level $\lat_{l+1}$ sequentially from top to bottom \eqref{eq:data_generating_process} by viewing the previously identified level $\lat_{l}$ as the conditioning variable $\uu$ in \eqref{eq:one_level_data_generating_process}.
    This reasoning gives the component-wise identifiability results for the entire hierarchical model.

    Since all the latent variables $\{ \late_{i} \}_{i=1}^{n(\lat)}$, we can view them as the observed variables. 
    The identifiability of the graphical structure $\cG$ follows from classic causal discovery methods (i.e., PC algorithm~\citep{spirtes2000causation}).

\end{proof}

\section{Additional Details for Experiments}

\subsection{Setup Details} \label{app:setup}
We train the model with a batch size of $800$ and a learning rate of $5e-5$. 
To inject multiple text conditions, we replicate the key and value linear layers in cross-attention, inspired by IP-Adapter \cite{ye2023ip}.
During testing, we prompt QWEN2.5~\citep{yang2024qwen2} with the instruction ``given a prompt X, segment it into three non-overlap descriptions (i.e., any two descriptions are not describing the same object), rewrite each subcaption to avoid interactions across each subcaption.''
For the experiments in Table~\ref{tab:comp_sota}, we employ the LayoutSAM dataset~\citep{zhang2024creatilayout} and finetune SANA-1.5~\citep{xie2025sana} with a batch size of $576$ for $20000$ steps at a learning rate of $5e-5$. We choose $ \lambda $ from $\{ 0.1, 1\}$.
The performance of \ours are over three random seeds, with a std of $0.1$.
\paragraph{Complexity.} Our method introduces moderate computational overhead. During training, it requires a (pre-computable) LLM forward pass to generate low-level prompts and computes the $\cL_\mathrm{n}$ sparsity loss, a simple DICE calculation on attention maps. During inference, it requires one LLM pass and $M$ cross-attention computations (where $M=3$ in our experiments) instead of one at each step. This represents a manageable trade-off for the significant improvement in compositional control.

\subsection{Language Model Usage} \label{app:lm_discussion}

We follow established practices~\citet{feng2024layoutgpt,lian2023llm,wu2023harnessing,yang2024mastering} to instruct QWENv2.5~\citep{yang2024qwen2} with a fixed instruction.
For example, QWENv2.5 rewrites ``a peacock is eating ice cream while...'' into ``A peacock is in the act of eating'', ..., ``a serving of ice cream is being visibly diminished''.
In our evaluation, we've found that QWEN2.5 performs decently for most examples, and more advanced models (Gemini 2.5 Flash, Claude 4) are superior on rare, challenging examples involving dense interactions of multiple concepts (e.g., detailed description of multiple mutually overlapping clothing items on a person). 
To quantify the performance of QWEN2.5, we instruct Claude 4 to evaluate the presence of high-level concepts in captions processed by QWEN2.5 and observe a $96\%$ success rate over $100$ DPG evaluation prompts.
We believe that the advancement of language models could further improve the performance.

\subsection{Additional Samples for Figure~\ref{fig:main_text_to_image}} \label{app:more_examples}
Figure~\ref{fig:app_text_to_image} and Figure~\ref{fig:app_text_to_image_more} display generated examples from \ours and baselines, with full text prompts.

\subsection{Additional Samples for Figure~\ref{fig:main_ablation}} \label{app:more_ablation}
Figure~\ref{fig:app_ablation} displays more examples for the ablation experiments in Figure~\ref{fig:main_ablation}.

\begin{figure*}[t]
 \begin{minipage}[t]{0.52\textwidth}
 \centering
       \textbf{Prompt}
    \end{minipage}%
    \hfill
    \begin{minipage}[t]{0.45\textwidth}
    \centering
      SD1.5 \cite{rombach2022high}
      \hfill
      ELLA \cite{hu2024ella}
      \hfill
      \textbf{Ours}
    \end{minipage}
    \noindent\rule{\textwidth}{0.4pt}
    \begin{minipage}[t]{0.52\textwidth}
        \vspace{0pt}  
        \textcolor{orange}{A sleek, black rectangular keyboard lies} comfortably on the luxurious beige carpet of a quiet home office, bathed in the gentle sunlight of early afternoon. The keys of the keyboard show signs of frequent use, and it's positioned diagonally across the plush carpet, which is textured with subtle patterns. Nearby, a rolling office chair with a high back and adjustable armrests sits invitingly, hinting at a quick break taken by its usual occupant.
    \end{minipage}%
    \hfill
    \begin{minipage}[t]{0.45\textwidth}
        \vspace{0pt}  
        \raisebox{0pt}[\height][0pt]{\includegraphics[width=0.32\textwidth]{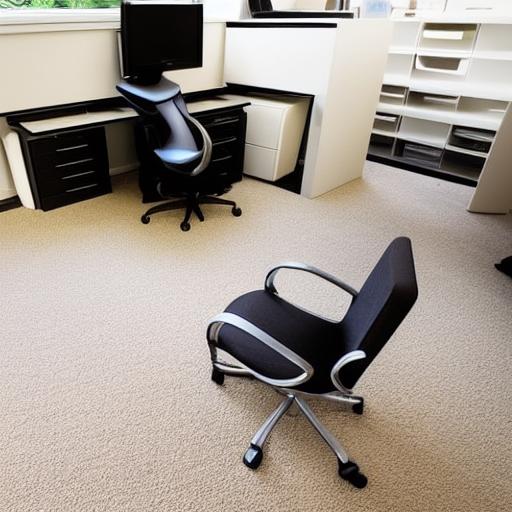}}%
        \hfill
        \raisebox{0pt}[\height][0pt]{\includegraphics[width=0.32\textwidth]{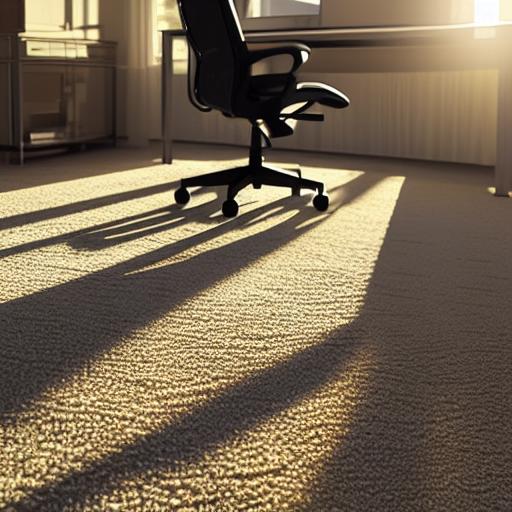}}%
        \hfill
        \raisebox{0pt}[\height][0pt]{\includegraphics[width=0.32\textwidth]{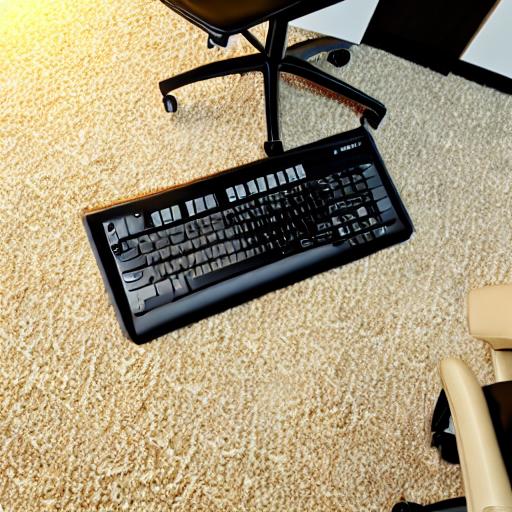}}
    \end{minipage}
    
    \noindent\rule{\textwidth}{0.4pt} 
    
    \begin{minipage}[t]{0.52\textwidth}
        \vspace{0pt}
        In the fading light of late afternoon, a scene unfolds in the autumn park, where \textcolor{orange}{a pair of worn brown boots} stands firm upon a bed of fallen orange leaves. Attached to these boots are \textcolor{orange}{two vibrant blue balloons}, gently swaying in the cool breeze. The balloons cast soft shadows on the ground, nestled among the trees with their leaves transitioning to auburn hues. Nearby, \textcolor{orange}{a wooden bench sits empty}, inviting passersby to witness the quiet juxtaposition of the still footwear and the dancing balloons.
    \end{minipage}%
    \hfill
    \begin{minipage}[t]{0.45\textwidth}
        \vspace{0pt}
        \raisebox{0pt}[\height][0pt]{\includegraphics[width=0.32\textwidth]{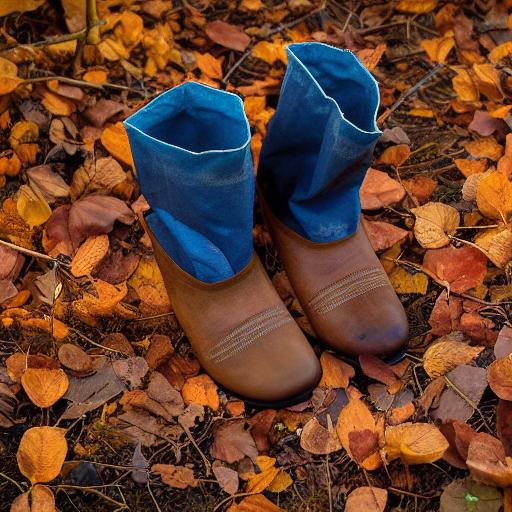}}%
        \hfill
        \raisebox{0pt}[\height][0pt]{\includegraphics[width=0.32\textwidth]{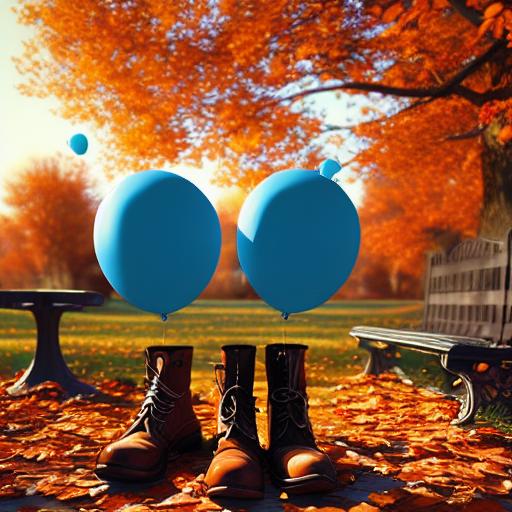}}%
        \hfill
        \raisebox{0pt}[\height][0pt]{\includegraphics[width=0.32\textwidth]{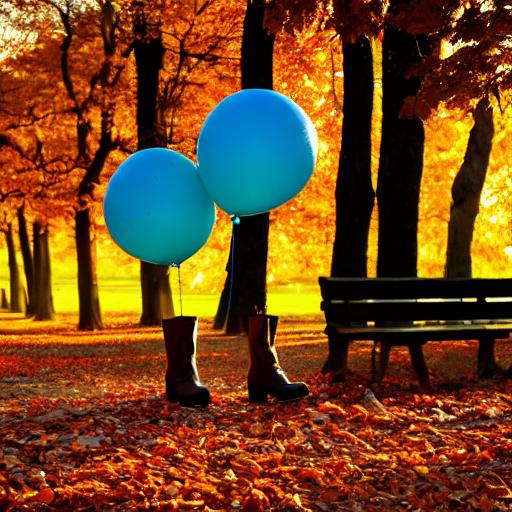}}
    \end{minipage}
    \noindent\rule{\textwidth}{0.4pt}
     \begin{minipage}[t]{0.52\textwidth}
        \vspace{0pt}
        A surreal composite image showcasing the \textcolor{orange}{iconic Sydney Opera House with its distinctive white sail-like structures}, positioned improbably beside the \textcolor{orange}{towering Eiffel Tower}, its iron lattice work silhouetted against the night. The backdrop is a vibrant blue sky, pulsating with dynamic energy, where yellow stars burst forth in a dazzling display, and swirls of deeper blue spiral outward. The scene is bathed in an ethereal light that highlights the contrasting textures of the smooth, shell-like tiles of the Opera House and the intricate metalwork of the Eiffel Tower.
    \end{minipage}%
    \hfill
    \begin{minipage}[t]{0.45\textwidth}
        \vspace{0pt}
        \raisebox{0pt}[\height][0pt]{\includegraphics[width=0.32\textwidth]{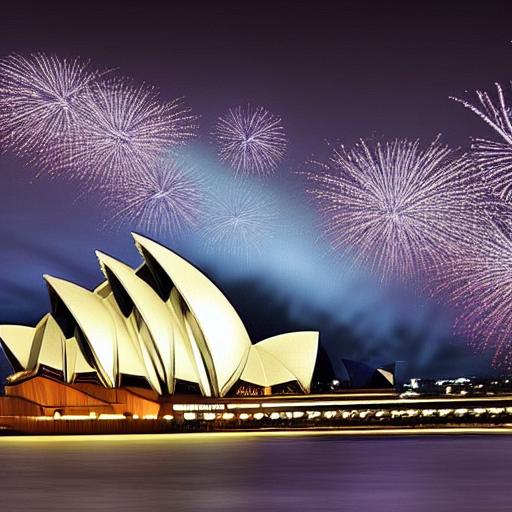}}%
        \hfill
        \raisebox{0pt}[\height][0pt]{\includegraphics[width=0.32\textwidth]{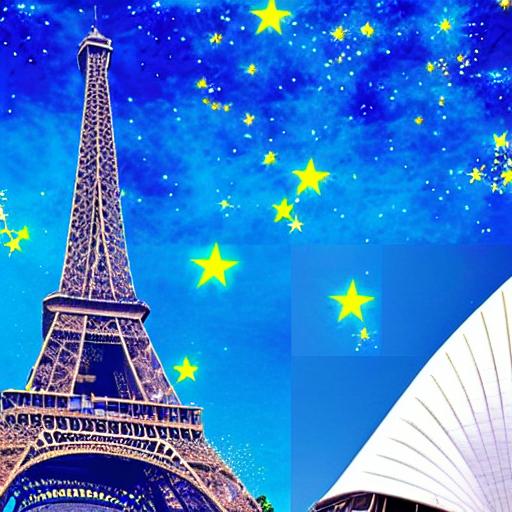}}%
        \hfill
        \raisebox{0pt}[\height][0pt]{\includegraphics[width=0.32\textwidth]{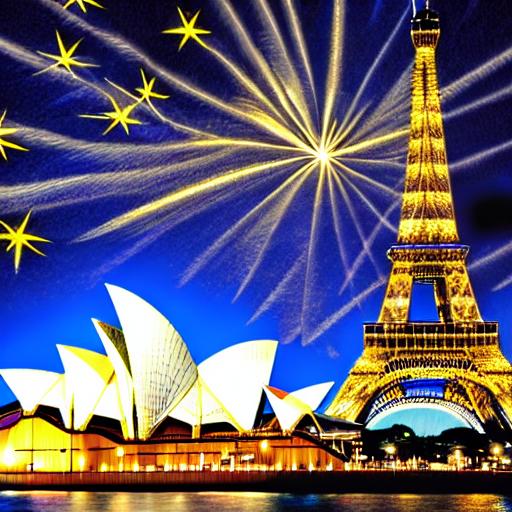}}
    \end{minipage}
     \noindent\rule{\textwidth}{0.4pt}
     \begin{minipage}[t]{0.52\textwidth}
        \vspace{0pt}
        An impressionistic painting depicts a vibrant blue cow standing serenely in a field of delicate white flowers. Adjacent to the cow, there is a robust tree with \textcolor{orange}{a canopy of red leaves and branches laden with yellow fruit}. The brushstrokes suggest a gentle breeze moving through the scene, and the cow's shadow is cast softly on the green grass beneath it.
    \end{minipage}%
    \hfill
    \begin{minipage}[t]{0.45\textwidth}
        \vspace{0pt}
        \raisebox{0pt}[\height][0pt]{\includegraphics[width=0.32\textwidth]{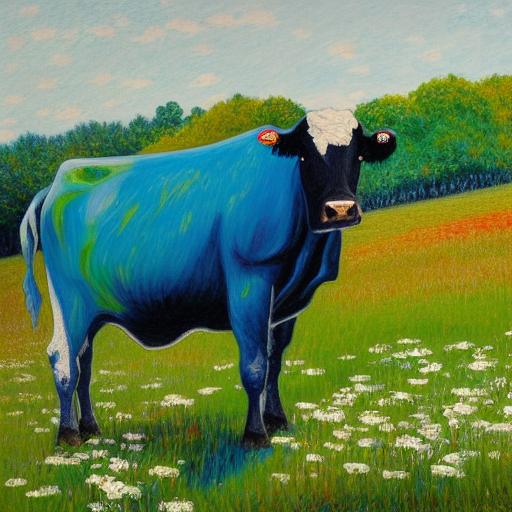}}%
        \hfill
        \raisebox{0pt}[\height][0pt]{\includegraphics[width=0.32\textwidth]{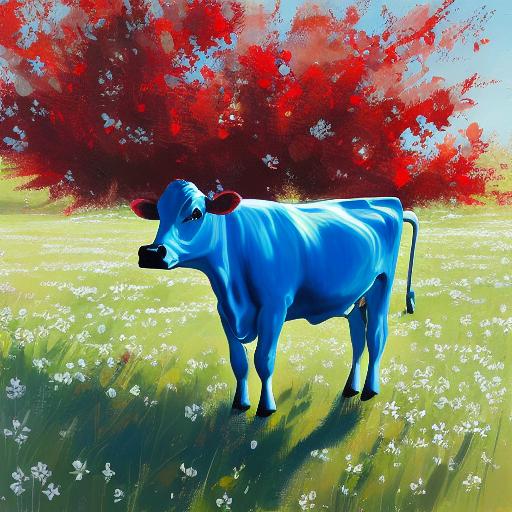}}%
        \hfill
        \raisebox{0pt}[\height][0pt]{\includegraphics[width=0.32\textwidth]{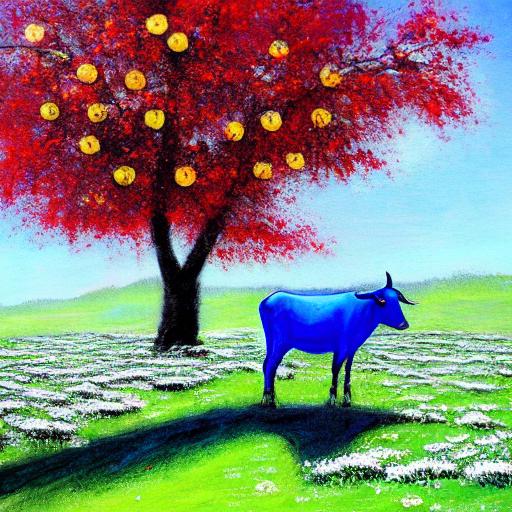}}
    \end{minipage}
     \noindent\rule{\textwidth}{0.4pt}
     \begin{minipage}[t]{0.52\textwidth}
        \vspace{0pt}
       \textcolor{orange}{a pyramid-shaped tablet} made of a smooth, matte grey stone stands in the foreground, its sharp edges contrasting with the wild, verdant foliage of the surrounding jungle. nearby, \textcolor{orange}{a crescent-shaped swing hangs from a sturdy tree branch}, crafted from a polished golden wood that glimmers slightly under the dappled sunlight filtering through the dense canopy above. the swing's smooth surface and gentle curve invite a sense of calm amidst the lush greenery.
    \end{minipage}%
    \hfill
    \begin{minipage}[t]{0.45\textwidth}
        \vspace{0pt}
        \raisebox{0pt}[\height][0pt]{\includegraphics[width=0.32\textwidth]{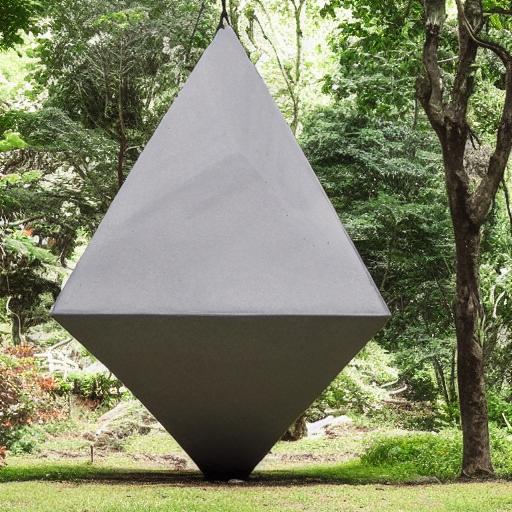}}%
        \hfill
        \raisebox{0pt}[\height][0pt]{\includegraphics[width=0.32\textwidth]{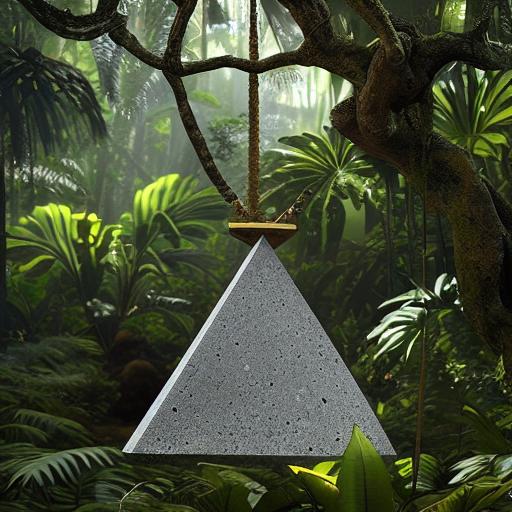}}%
        \hfill
        \raisebox{0pt}[\height][0pt]{\includegraphics[width=0.32\textwidth]{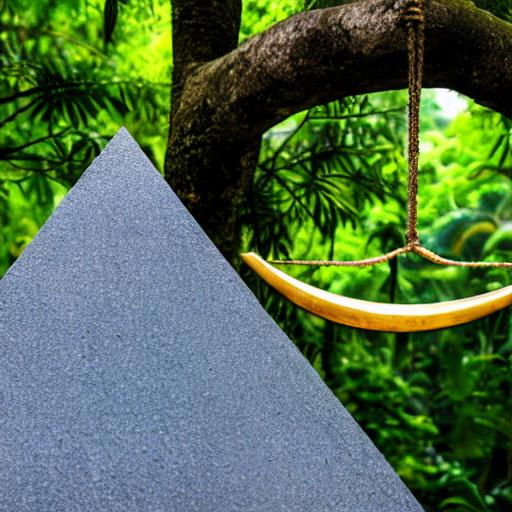}}
    \end{minipage}
     \noindent\rule{\textwidth}{0.4pt}
     \begin{minipage}[t]{0.52\textwidth}
        \vspace{0pt}
      In a spacious loft with high ceilings and \textcolor{orange}{exposed brick walls}, the morning light filters through large windows, casting a soft glow on a pair of trendy, high-top sneakers. These sneakers, made of rugged leather with bold laces, contrast sharply with the \textcolor{orange}{ornate, metallic vintage coffee machine standing next to them}. The coffee machine, with its intricate details and polished finish, reflects the light beautifully, setting a striking juxtaposition against the practical, street-style footwear on the polished concrete floor.
    \end{minipage}%
    \hfill
    \begin{minipage}[t]{0.45\textwidth}
        \vspace{0pt}
        \raisebox{0pt}[\height][0pt]{\includegraphics[width=0.32\textwidth]{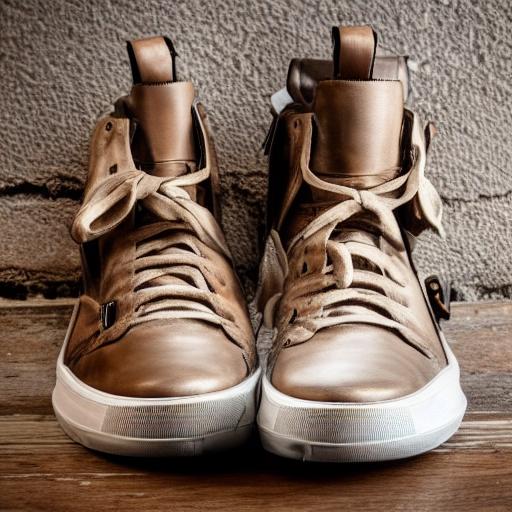}}%
        \hfill
        \raisebox{0pt}[\height][0pt]{\includegraphics[width=0.32\textwidth]{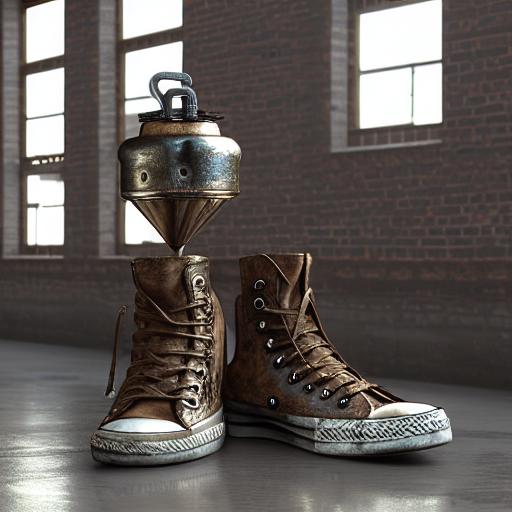}}%
        \hfill
        \raisebox{0pt}[\height][0pt]{\includegraphics[width=0.32\textwidth]{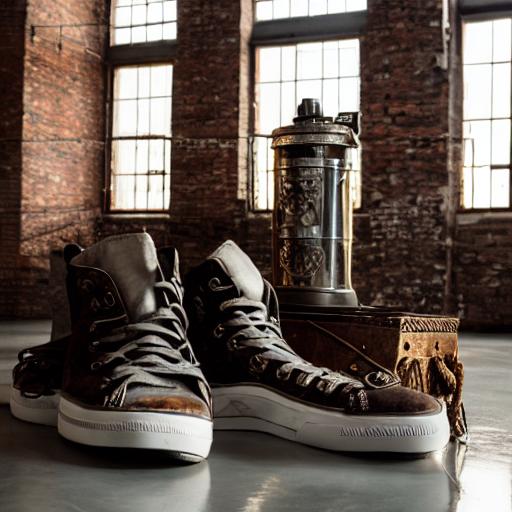}}
    \end{minipage}
     \noindent\rule{\textwidth}{0.4pt}
    \caption{
        \textbf{More text-to-image generation results.}
    }
    \label{fig:app_text_to_image}
\end{figure*}

\begin{figure*}[t]
 \begin{minipage}[t]{0.52\textwidth}
 \centering
       \textbf{Prompt}
    \end{minipage}%
    \hfill
    \begin{minipage}[t]{0.45\textwidth}
    \centering
      SD1.5 \cite{rombach2022high}
      \hfill
      ELLA \cite{hu2024ella}
      \hfill
      \textbf{Ours}
    \end{minipage}
    \noindent\rule{\textwidth}{0.4pt}
    \begin{minipage}[t]{0.52\textwidth}
        \vspace{0pt}  
       An elegant pair of glasses with a unique, \textcolor{orange}{gold hexagonal frame} laying on a smooth, dark wooden surface. The thin metal glints in the ambient light, highlighting the craftsmanship of the frame. The clear lenses reflect a faint image of the room's ceiling lights. To the side of the glasses, \textcolor{orange}{a leather-bound book} is partially open, its pages untouched.
    \end{minipage}%
    \hfill
    \begin{minipage}[t]{0.45\textwidth}
        \vspace{0pt}  
        \raisebox{0pt}[\height][0pt]{\includegraphics[width=0.32\textwidth]{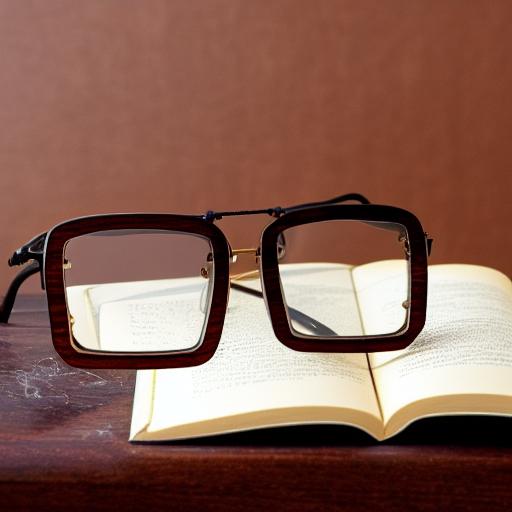}}%
        \hfill
        \raisebox{0pt}[\height][0pt]{\includegraphics[width=0.32\textwidth]{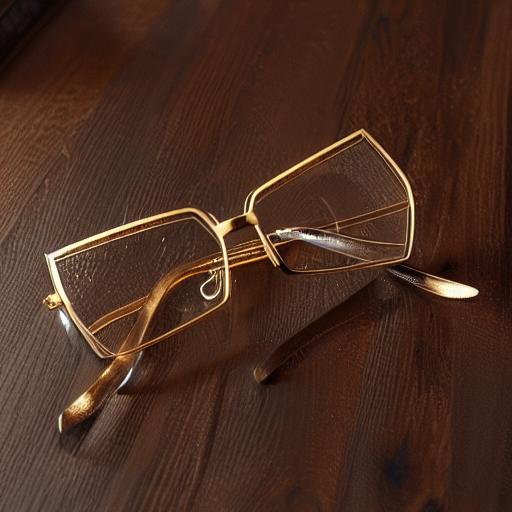}}%
        \hfill
        \raisebox{0pt}[\height][0pt]{\includegraphics[width=0.32\textwidth]{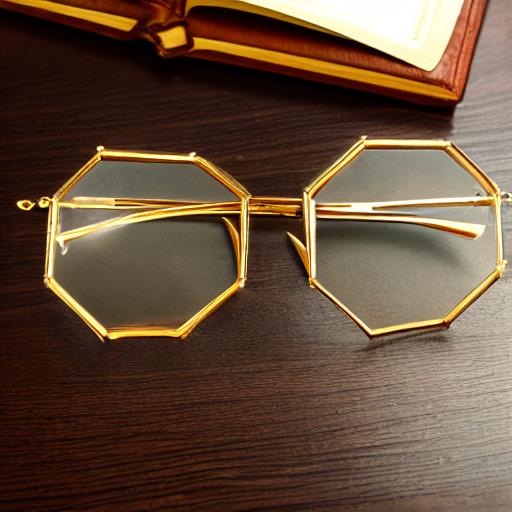}}
    \end{minipage}
    
    \noindent\rule{\textwidth}{0.4pt} 
    
    \begin{minipage}[t]{0.52\textwidth}
        \vspace{0pt}
      \textcolor{orange}{Two multicolored butterflies} with delicate, veined wings gently balance atop \textcolor{orange}{a vibrant, orange tangerine} in a bustling garden. The tangerine, with its glossy, dimpled texture, is situated on a wooden table, contrasting with the greenery of the surrounding foliage and flowers. The butterflies, appearing nearly small in comparison, add a touch of grace to the scene, complementing the natural colors of the verdant backdrop.
    \end{minipage}%
    \hfill
    \begin{minipage}[t]{0.45\textwidth}
        \vspace{0pt}
        \raisebox{0pt}[\height][0pt]{\includegraphics[width=0.32\textwidth]{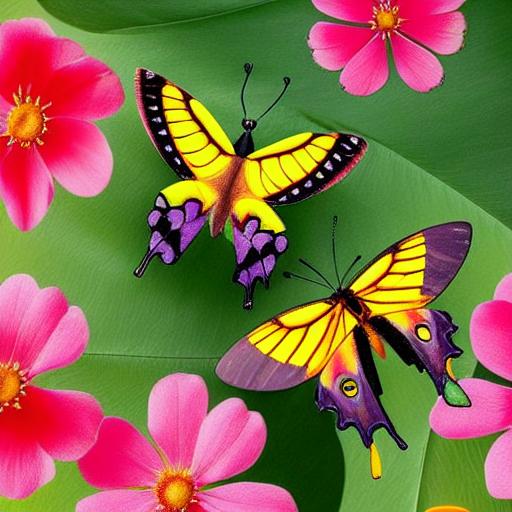}}%
        \hfill
        \raisebox{0pt}[\height][0pt]{\includegraphics[width=0.32\textwidth]{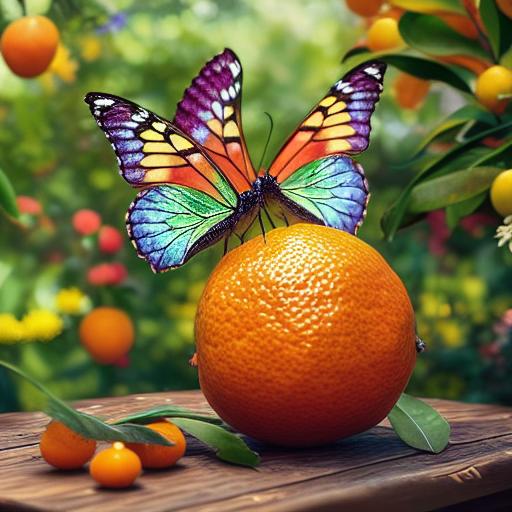}}%
        \hfill
        \raisebox{0pt}[\height][0pt]{\includegraphics[width=0.32\textwidth]{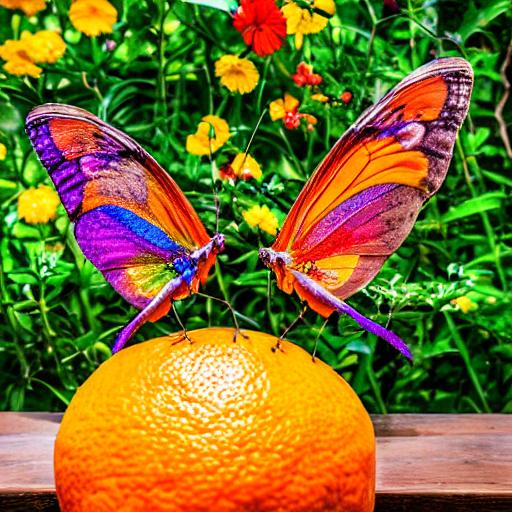}}
    \end{minipage}
    \noindent\rule{\textwidth}{0.4pt}
     \begin{minipage}[t]{0.52\textwidth}
        \vspace{0pt}
       \textcolor{orange}{Two sleek blue showerheads}, mounted against a backdrop of white ceramic tiles, release a steady stream of water. The water cascades down onto \textsc{orange}{a vivid, crisp green pear}that is centrally positioned directly beneath them. The pear's smooth and shiny surface gleams as the water droplets rhythmically bounce off, creating a tranquil, almost rhythmic sound in the otherwise silent bathroom.
    \end{minipage}%
    \hfill
    \begin{minipage}[t]{0.45\textwidth}
        \vspace{0pt}
        \raisebox{0pt}[\height][0pt]{\includegraphics[width=0.32\textwidth]{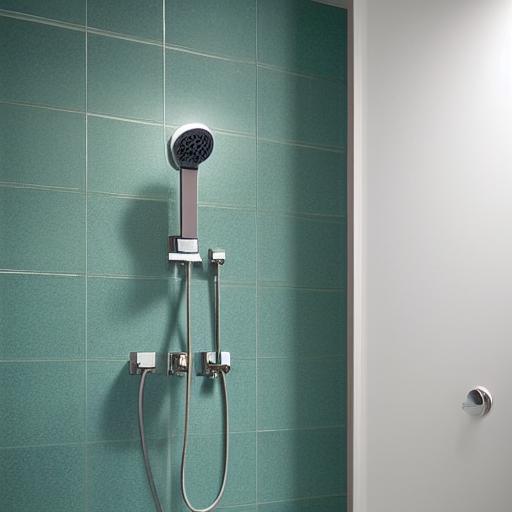}}%
        \hfill
        \raisebox{0pt}[\height][0pt]{\includegraphics[width=0.32\textwidth]{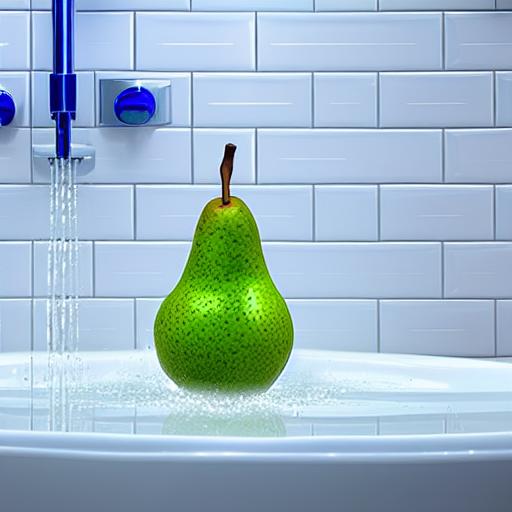}}%
        \hfill
        \raisebox{0pt}[\height][0pt]{\includegraphics[width=0.32\textwidth]{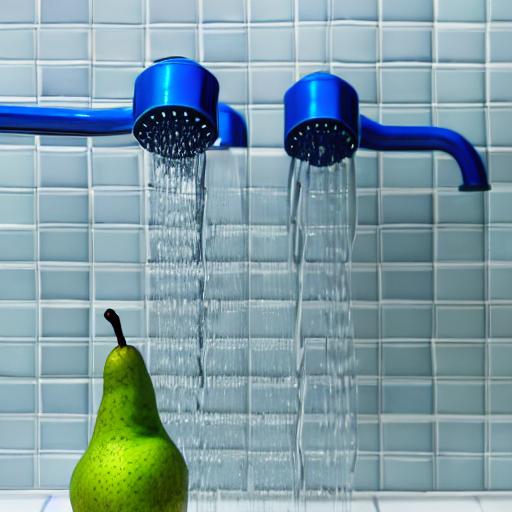}}
    \end{minipage}
     \noindent\rule{\textwidth}{0.4pt}
     \begin{minipage}[t]{0.52\textwidth}
        \vspace{0pt}
      In a modern kitchen, \textcolor{orange}{a square, chrome toaster} with a sleek finish sits prominently on the marble countertop, \textcolor{orange}{its size dwarfing the nearby red vintage rotary telephone}, which is placed quaintly on a wooden dining table. The telephone's vibrant red hue contrasts with the neutral tones of the kitchen, and its cord coils gracefully beside it. The polished surfaces of both the toaster and the telephone catch the ambient light, adding a subtle shine to their respective textures.
    \end{minipage}%
    \hfill
    \begin{minipage}[t]{0.45\textwidth}
        \vspace{0pt}
        \raisebox{0pt}[\height][0pt]{\includegraphics[width=0.32\textwidth]{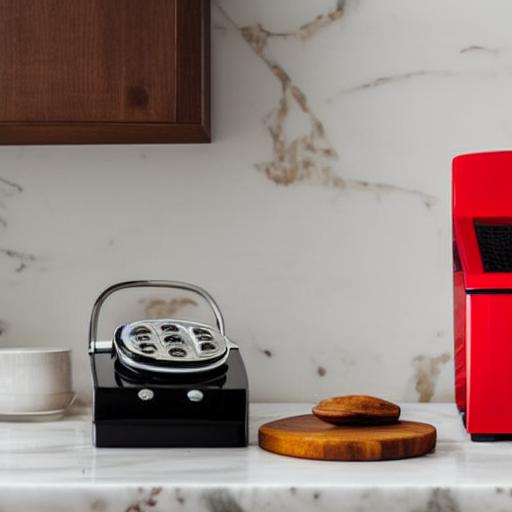}}%
        \hfill
        \raisebox{0pt}[\height][0pt]{\includegraphics[width=0.32\textwidth]{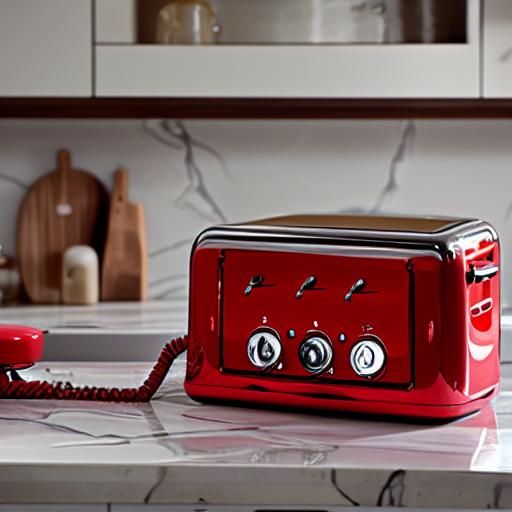}}%
        \hfill
        \raisebox{0pt}[\height][0pt]{\includegraphics[width=0.32\textwidth]{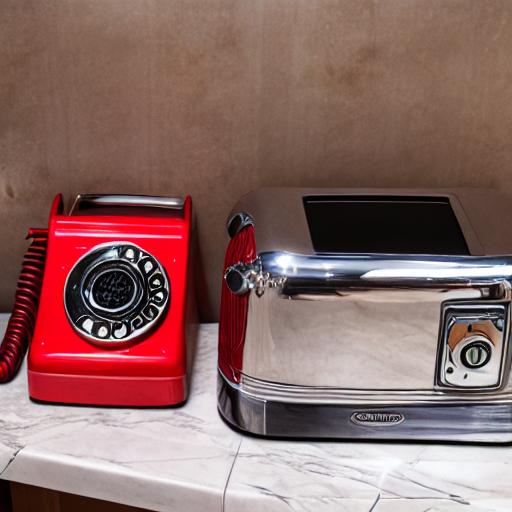}}
    \end{minipage}
     \noindent\rule{\textwidth}{0.4pt}
     \begin{minipage}[t]{0.52\textwidth}
        \vspace{0pt}
       \textcolor{orange}{Two slender bamboo-colored chopsticks} lie diagonally atop a smooth, round wooden cutting board with a rich grain pattern. The chopsticks, tapered to fine points, create a striking contrast against the cutting board's more robust and circular form. Around the board, \textcolor{orange}{there are flecks of freshly chopped green herbs and a small pile of julienned carrots}, adding a touch of color to the scene.
    \end{minipage}%
    \hfill
    \begin{minipage}[t]{0.45\textwidth}
        \vspace{0pt}
        \raisebox{0pt}[\height][0pt]{\includegraphics[width=0.32\textwidth]{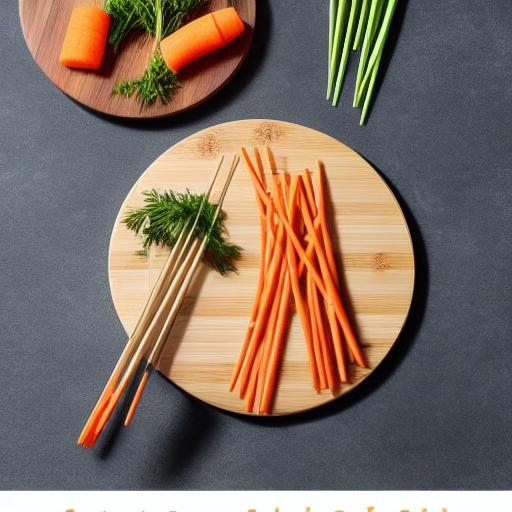}}%
        \hfill
        \raisebox{0pt}[\height][0pt]{\includegraphics[width=0.32\textwidth]{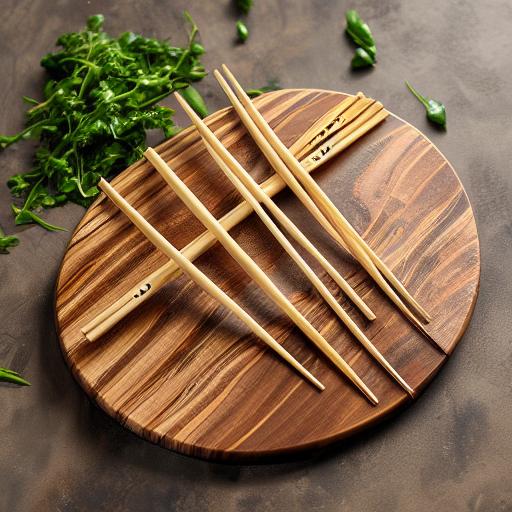}}%
        \hfill
        \raisebox{0pt}[\height][0pt]{\includegraphics[width=0.32\textwidth]{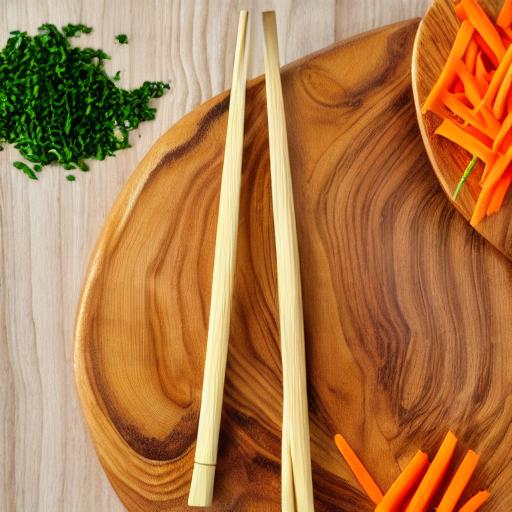}}
    \end{minipage}
     \noindent\rule{\textwidth}{0.4pt}
     \begin{minipage}[t]{0.52\textwidth}
        \vspace{0pt}
     A cozy bathroom features a pristine, white claw-foot bathtub on a backdrop of pastel green tiles. \textcolor{orange}{Adjacent to the tub, a tower of soft, white toilet paper is neatly stacked}, glimmering gently in the diffuse glow of the afternoon sunlight streaming through a frosted window. The gentle curvature of the tub contrasts with the straight lines of the stack, creating a harmonious balance of shapes within the intimate space.
    \end{minipage}%
    \hfill
    \begin{minipage}[t]{0.45\textwidth}
        \vspace{0pt}
        \raisebox{0pt}[\height][0pt]{\includegraphics[width=0.32\textwidth]{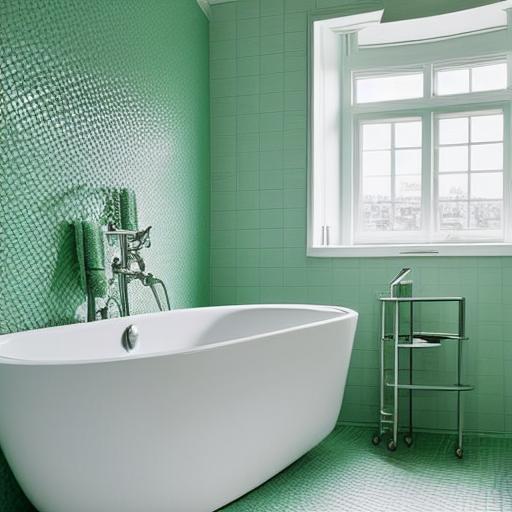}}%
        \hfill
        \raisebox{0pt}[\height][0pt]{\includegraphics[width=0.32\textwidth]{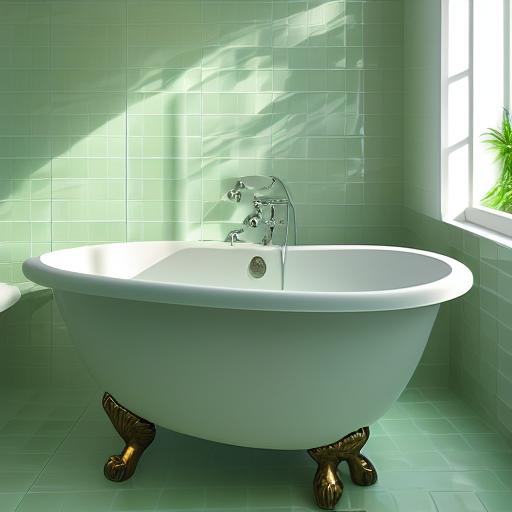}}%
        \hfill
        \raisebox{0pt}[\height][0pt]{\includegraphics[width=0.32\textwidth]{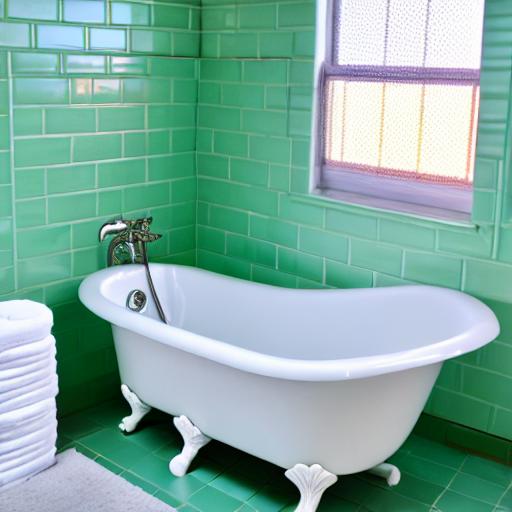}}
    \end{minipage}
     \noindent\rule{\textwidth}{0.4pt}
    \caption{
        \textbf{More text-to-image generation results.}
    }
    \label{fig:app_text_to_image_more}
\end{figure*}

\begin{figure*}[th]
 \noindent\rule{\textwidth}{0.4pt}
  \begin{minipage}[t]{0.52\textwidth}
 \centering
       \textbf{Prompt}
    \end{minipage}%
    \hfill
    \begin{minipage}[t]{0.45\textwidth}
     ~~~~~~w/o TD ~~~~~~~~~~~~~~~~~w/o SR ~~~~~~~~~~~~~~~\ours
    \end{minipage}
 \noindent\rule{\textwidth}{0.4pt}
    \begin{minipage}[t]{0.52\textwidth}
        \vspace{0pt}  
    A vibrant pink pig trots through a snowy landscape, \textcolor{blue}{a bright blue backpack strapped securely to its back}. The pig's thick coat contrasts with the soft white blanket of snow that covers the ground around it. As it moves, the blue backpack stands out against the pig's colorful hide and the winter scene, creating a striking visual amidst the serene, frost-covered backdrop.
    \end{minipage}%
    \hfill
    \begin{minipage}[t]{0.45\textwidth}
        \vspace{0pt}
        \raisebox{0pt}[\height][0pt]{\includegraphics[width=0.32\textwidth]{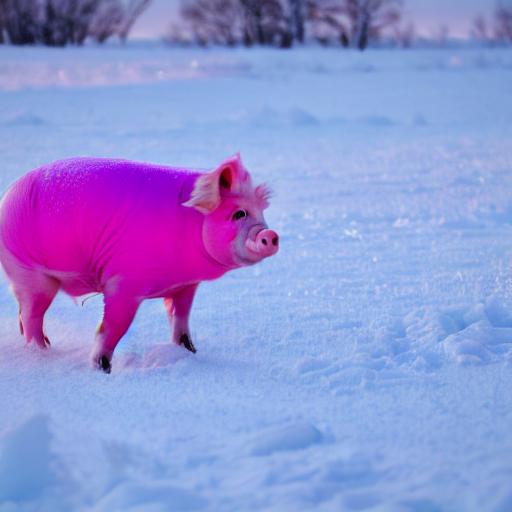}}%
        \hfill
        \raisebox{0pt}[\height][0pt]{\includegraphics[width=0.32\textwidth]{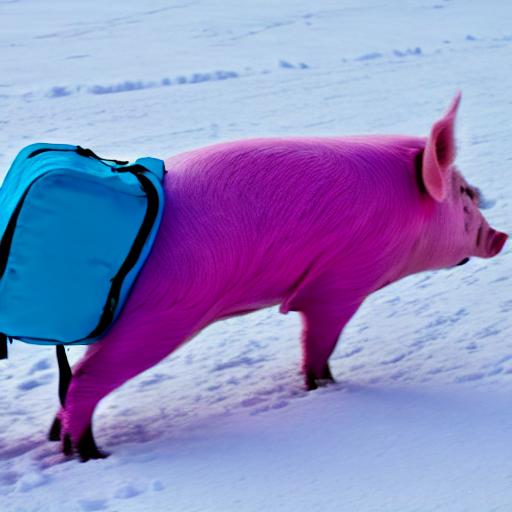}}%
        \hfill
        \raisebox{0pt}[\height][0pt]{\includegraphics[width=0.32\textwidth]{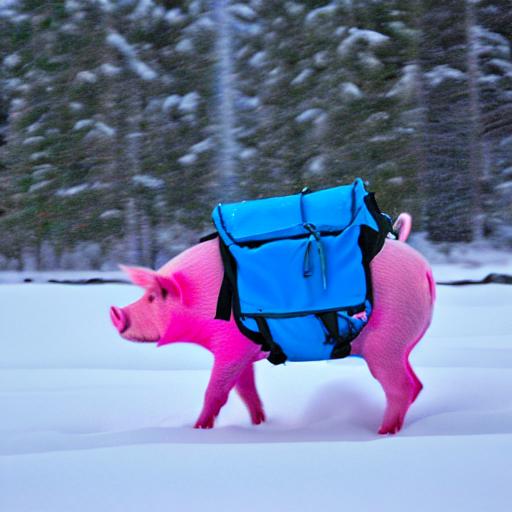}}
    \end{minipage}
     \noindent\rule{\textwidth}{0.4pt}
      \begin{minipage}[t]{0.52\textwidth}
        \vspace{0pt}  
    \end{minipage}%
    \hfill
    \begin{minipage}[t]{0.45\textwidth}
        \vspace{0pt}
        \raisebox{0pt}[\height][0pt]{\includegraphics[width=0.32\textwidth]{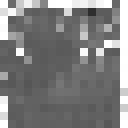}}%
        \hfill
        \raisebox{0pt}[\height][0pt]{\includegraphics[width=0.32\textwidth]{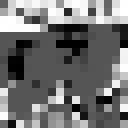}}%
        \hfill
        \raisebox{0pt}[\height][0pt]{\includegraphics[width=0.32\textwidth]{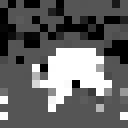}}
    \end{minipage}
     \noindent\rule{\textwidth}{0.4pt}
      \begin{minipage}[t]{0.52\textwidth}
        \vspace{0pt}  
    \end{minipage}%
    \hfill
    \begin{minipage}[t]{0.45\textwidth}
        \vspace{0pt}
        \raisebox{0pt}[\height][0pt]{\includegraphics[width=0.32\textwidth]{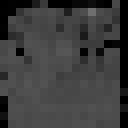}}%
        \hfill
        \raisebox{0pt}[\height][0pt]{\includegraphics[width=0.32\textwidth]{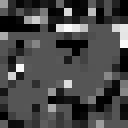}}%
        \hfill
        \raisebox{0pt}[\height][0pt]{\includegraphics[width=0.32\textwidth]{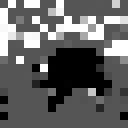}}
    \end{minipage}
    \noindent\rule{\textwidth}{0.4pt}
    \begin{minipage}[t]{0.52\textwidth}
        \vspace{0pt}  
  An outsized dolphin with a sleek, gray body glides through the blue waters, while a small, fluffy chicken with speckled brown and white feathers stands on the \textcolor{blue}{nearby sandy shore}, appearing \textcolor{blue}{diminutive} in comparison. The dolphin's fins cut through the water, creating gentle ripples, while \textcolor{blue}{the chicken pecks at the ground}, seemingly oblivious to the vast size difference. The stark contrast between the dolphin's smooth, aquatic grace and the chicken's terrestrial, feathered form is highlighted by their proximity to one another.
    \end{minipage}%
    \hfill
    \begin{minipage}[t]{0.45\textwidth}
        \vspace{0pt}
        \raisebox{0pt}[\height][0pt]{\includegraphics[width=0.32\textwidth]{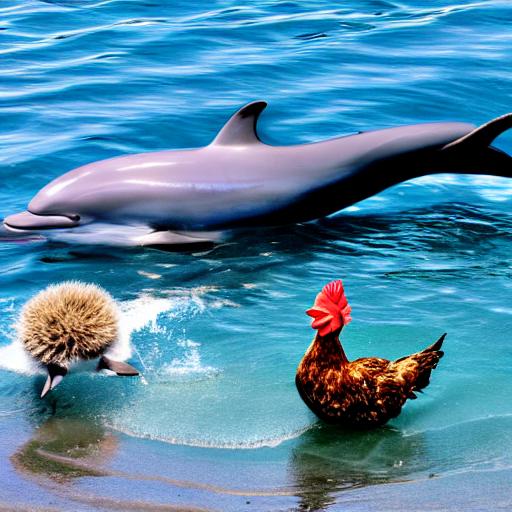}}%
        \hfill
        \raisebox{0pt}[\height][0pt]{\includegraphics[width=0.32\textwidth]{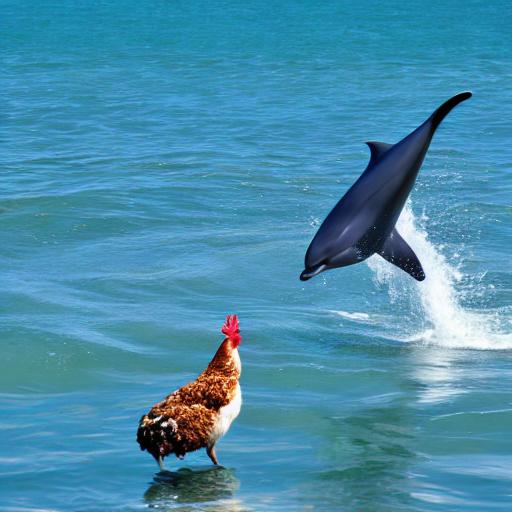}}%
        \hfill
        \raisebox{0pt}[\height][0pt]{\includegraphics[width=0.32\textwidth]{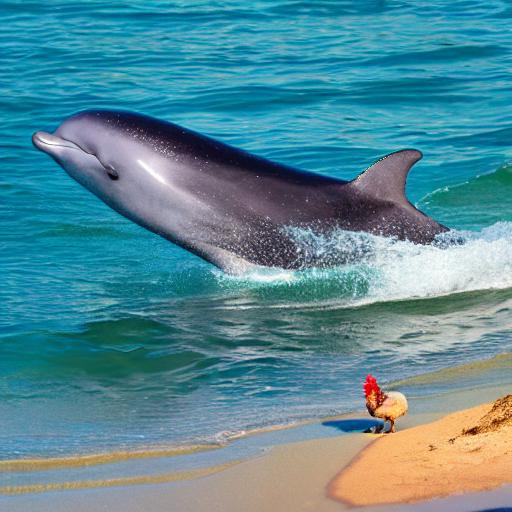}}
    \end{minipage}
     \noindent\rule{\textwidth}{0.4pt}
      \begin{minipage}[t]{0.52\textwidth}
        \vspace{0pt}  
    \end{minipage}%
    \hfill
    \begin{minipage}[t]{0.45\textwidth}
        \vspace{0pt}
        \raisebox{0pt}[\height][0pt]{\includegraphics[width=0.32\textwidth]{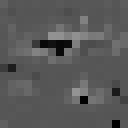}}%
        \hfill
        \raisebox{0pt}[\height][0pt]{\includegraphics[width=0.32\textwidth]{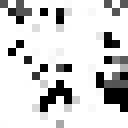}}%
        \hfill
        \raisebox{0pt}[\height][0pt]{\includegraphics[width=0.32\textwidth]{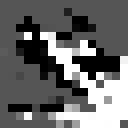}}
    \end{minipage}
     \noindent\rule{\textwidth}{0.4pt}
      \begin{minipage}[t]{0.52\textwidth}
        \vspace{0pt}  
    \end{minipage}%
    \hfill
    \begin{minipage}[t]{0.45\textwidth}
        \vspace{0pt}
        \raisebox{0pt}[\height][0pt]{\includegraphics[width=0.32\textwidth]{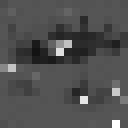}}%
        \hfill
        \raisebox{0pt}[\height][0pt]{\includegraphics[width=0.32\textwidth]{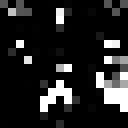}}%
        \hfill
        \raisebox{0pt}[\height][0pt]{\includegraphics[width=0.32\textwidth]{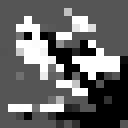}}
    \end{minipage}
    \caption{
        \textbf{More ablation studies.} 
        Without time-dependence (TD), the model fails to understand the relationship among the objects in the prompt. 
        Without sparsity regularization (SR), the influence of each prompt could be large, e.g., the attention map of local prompt 1 covers the pineapple and beers. Combining the two proposed designs, \ours generates images that accurately follow the complex text prompt.
    }
    \label{fig:app_ablation}
\end{figure*}

\subsection{More Empirical Understanding} \label{app:more_empirical_understanding}
While implicit models can be highly expressive, they can struggle with compositional generalization as many solutions might fit the training data but not generalize beyond. Our work introduces a theoretically motivated sparsity constraint (Eq.~\ref{eq:training_objective}) to select more generalizable solutions.
Following your suggestion, we've added fine-grained qualitative analysis in Fig.~\ref{fig:understanding_attention}. 
In Fig.~\ref{fig:understanding_attention}(a), our model attends to ``cat'' (L1) and ``sunglasses'' (L2) separately, and the baseline attends to all regions and omits ``sunglasses''.
Similarly, in Fig.~\ref{fig:understanding_attention}(b), our model, with sparse constraints focusing attention (L1 on ``bear'' and  L2 on ``cat''), renders both; the baseline's simultaneous generation misses ``cat''.
The analysis also highlights cases challenging to our model, such as the difficulty in decomposing words and printing the resultant letters correctly (L1 at 901 covers all letters simultaneously) in Fig.~\ref{fig:understanding_attention}(c).
While extreme sparsity can affect performance in dense interaction scenes (e.g., missing ``herb'' in Fig.~\ref{fig:understanding_attention}(d)), the model's superior performance over the baseline on examples here and all benchmarks confirms its robustness for these scenarios.

\begin{figure}[th]
    \centering
    \includegraphics[width=1.1\linewidth]{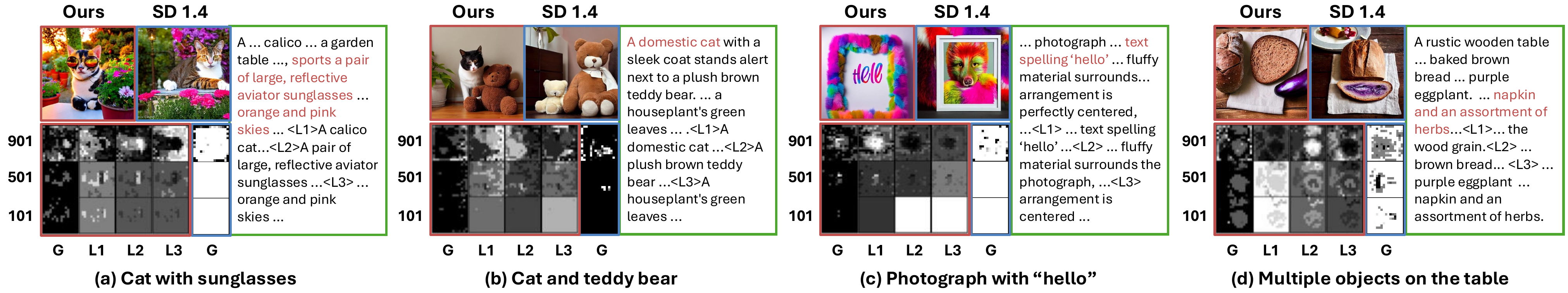}
    \caption{
        \textbf{Fine-grained comparison between our method and Stable Diffusion 1.4.} $G$ and $L_{i}$ indicate full caption and split captions (for our method), and indices denote diffusion steps ($901$ is closer to noise). White indicates high attention scores.
    }
    \label{fig:understanding_attention}
\end{figure}

\end{document}